\documentclass{article}

\PassOptionsToPackage{numbers,sort&compress}{natbib}
\usepackage[preprint]{neurips_2026}

\usepackage[utf8]{inputenc}
\usepackage[T1]{fontenc}
\usepackage{hyperref}
\usepackage{url}
\usepackage{booktabs}
\usepackage{amsfonts}
\usepackage{xcolor}
\usepackage{todonotes}
\usepackage{adjustbox}
\usepackage{amsmath,amssymb,amsthm,mathtools}
\usepackage{enumitem}
\usepackage{graphicx}
\usepackage{thm-restate}
\usepackage{wrapfig}
\usepackage{tikz}
\usetikzlibrary{positioning,arrows.meta,calc}

\definecolor{cPurple}{HTML}{534AB7}
\definecolor{cPurpleBg}{HTML}{EEEDFE}
\definecolor{cPurpleDark}{HTML}{3C3489}
\definecolor{cTeal}{HTML}{1D9E75}
\definecolor{cTealBg}{HTML}{E1F5EE}
\definecolor{cTealDark}{HTML}{0F6E56}
\definecolor{cAmberBg}{HTML}{FAEEDA}
\definecolor{cAmberDark}{HTML}{854F0B}
\definecolor{cRedBg}{HTML}{FCEBEB}
\definecolor{cRedDark}{HTML}{A32D2D}
\definecolor{cGrayBg}{HTML}{F1EFE8}
\definecolor{cGrayDark}{HTML}{5F5E5A}
\definecolor{cGray}{HTML}{888780}

\usepackage[most]{tcolorbox}
\usepackage{listings}

\definecolor{promptbg}{HTML}{F7F4EA}
\definecolor{promptframe}{HTML}{8A6F3D}

\newtcblisting{promptbox}[1][]{
    listing only,
    breakable,
    enhanced,
    colback=promptbg,
    colframe=promptframe,
    arc=2pt,
    boxrule=0.45pt,
    left=5pt,
    right=5pt,
    top=5pt,
    bottom=5pt,
    title=#1,
    fonttitle=\bfseries\small,
    coltitle=black,
    listing options={
        basicstyle=\ttfamily\scriptsize,
        breaklines=true,
        columns=fullflexible,
        keepspaces=true,
        showstringspaces=false
    }
}

\newtheorem{definition}{Definition}
\newtheorem{assumption}{Assumption}
\newtheorem{theorem}{Theorem}

\newtheorem{proposition}{Proposition}

\newcommand{\X}{\mathcal{X}}

\newcommand{\Sbb}{\mathcal{S}}
\newcommand{\Abb}{\mathcal{A}}

\newcommand{\Pbb}{\mathbb{P}}
\newcommand{\Ebb}{\mathbb{E}}

\newcommand{\eps}{\varepsilon}
\newcommand{\Out}{\mathrm{Out}}

\newcommand{\Valid}{\mathrm{Valid}}
\newcommand{\Sound}{\mathrm{Sound}}

\newcommand{\Term}{\mathrm{Term}}
\newcommand{\poly}{\mathrm{poly}}

\author{%
  Varun Sunkaraneni\thanks{Equal contribution. Random coin flip determined author order between the first two authors.} \\
  Texas A\&M University \\
  \And
  Pierfrancesco Beneventano$^*$ \\
  MIT \\
  \And
  Riccardo Neumarker \\
  MIT \\
  \And
  Tomaso Poggio \\
  MIT \\
  \And
  Tomer Galanti\thanks{Corresponding author: galanti@tamu.edu.} \\
  Texas A\&M University \\
}

\title{Agentic Systems as \textit{Boosting} Weak Reasoning Models}

\begin{document}

\maketitle

\begin{abstract}
Can a committee of weak reasoning-model calls reach the performance of much stronger models? We study verifier-backed committee search as inference-time boosting for reasoning language models. The mechanism is not simply that ``more agents help'': samples expose latent correct solutions, while critics and comparators must recover them without access to the hidden verifier. We formalize this view by separating proposal coverage, local identifiability, progress, and diversity. We prove that coverage can be amplified by repeated sampling, but cannot by itself create useful critics or comparators; reliable amplification requires an additional local soundness signal, such as execution, proof checking, type checking, tests, or constraint solving. We give rank-based bounds showing when local selection errors compose into reliable trajectories, and characterize the proposer-side ceiling: oracle best-of-\(k\) converges only to the mass of task slices on which the proposal system assigns nonzero useful probability. Empirically, on SWE-bench Verified, a single \texttt{GPT-5.4 nano} proposal
solves \(67.0\%\) of tasks. Using the same nano model, our critic--comparator orchestration reaches \(76.4\%\) with \(k=8\) proposals, matching the standalone performance of \texttt{Gemini 3 Pro} and \texttt{Claude Opus 4.5} Thinking and approaching the \(79.0\%\) oracle best-of-\(8\) upper bound. Thus, many correct patches are already present in weak-model proposal pools; the main challenge is
selecting them. The remaining failures are mostly proposal-coverage failures, indicating shared blind spots that stronger selection alone cannot close.
\end{abstract}

\section{Introduction}
\label{sec:intro}

Boosting turns weak predictors into strong predictors by repeatedly combining
imperfect but useful signals~\citep{Schapire1990,Freund1995Boosting,FreundSchapire1997}.
Modern language-model systems use a related idea at inference time: they sample
several candidates, check or compare them, search over partial states, and
select a final output~\citep{Wang2023,Li2024MoreAgents,Brown2024LargeLanguageMonkeys,
Chen2024MoreCalls,Huang2025BestOfN,Yao2023ToT}. However, reasoning is not
ordinary supervised boosting. In supervised prediction, each weak learner
returns a label that can be evaluated against training examples. In reasoning,
the system must instead generate an intermediate move, decide whether that move
is useful, and avoid letting small local errors accumulate into a wrong final
answer.

We study this mechanism for verifier-backed reasoning tasks such as code repair,
theorem proving, and program synthesis. These domains provide tests, proof
checkers, type checkers, execution, or constraint solvers that can supply local
soundness signals
\citep{Cobbe2021,Lightman2023,Jimenez2024SWEbench,Yang2024SWEagent,
Zhang2024AutoCodeRover,Xia2025Agentless}. We model agentic systems as
\emph{inference-time boosting} for reasoning language models: repeated weak
proposals increase the chance of producing a useful next move, critics or
comparators help identify that move, and verifier-backed progress allows useful
moves to be chained into a terminal solution.

The analysis separates four quantities: proposal coverage, local selection signal or identifiability, progress, and diversity. Coverage asks whether a good move appears; identifiability asks whether the system can recognize it; progress makes local choices compose; diversity determines whether more calls escape different failure modes.
\begin{wrapfigure}{r}{0.55\linewidth}
\vspace{-1.5em}
\centering
\includegraphics[width=\linewidth]{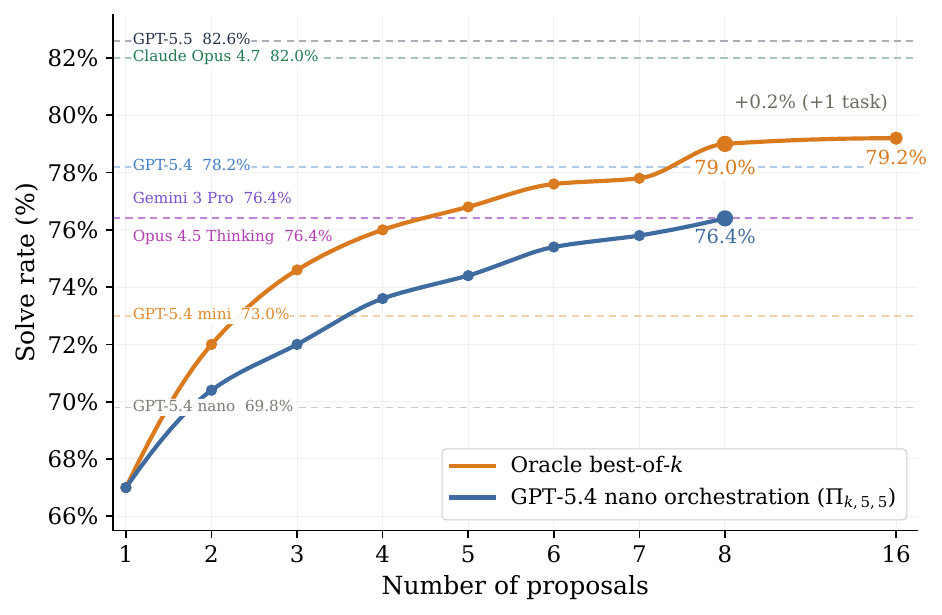}
\vspace{-1.2em}
\caption{
\textbf{A committee of \texttt{GPT-5.4 nano} calls reaches much stronger models.}
Increasing proposer diversity lifts nano orchestration far above the nano
baseline and up to \texttt{Gemini 3 Pro} and \texttt{Claude Opus 4.5} Thinking. The oracle
best-of-\(n\) curve shows that correct solutions are often already in the
proposal pool; the remaining gap is selection. Dashed lines denote single-model
resolve rates.
}
\label{fig:teaser_orchestration_scaling}
\vspace{-1.0em}
\end{wrapfigure}
This separation is essential. Sampling more candidates can increase the chance
that a useful move appears, but sampling alone does not explain how the system
recognizes that move. Final-answer verification is also not enough: for a
multi-step task, the system needs intermediate states where progress can be
generated, checked, and safely composed. On the proposer side, more calls reduce
ordinary sampling noise, but they cannot fix shared blind spots. If all
proposers assign near-zero probability to the useful moves needed for a
particular type of instance, then even an ideal critic cannot recover those
moves from the sample pool. Thus best-of-$k$ with an oracle critic measures
an upper bound on what inference-time selection can recover from the proposed
candidates, not the full capability of an ideal reasoner.

This viewpoint also changes how such systems should be evaluated. Pass@1
measures one-shot generation. Oracle best-of-$k$ measures whether a correct
solution appears anywhere in the sampled candidate pool. Implemented system
success measures how much of this oracle gap is recovered by a finite critic,
comparator, verifier, or search harness. Budgeted success curves measure how
quickly the harness approaches its best achievable performance as the number of
calls, checks, or search steps increases.

{\bf Running example.\enspace}
Consider SWE-bench Verified~\citep{Jimenez2024SWEbench}. The system receives a repository, an issue description, and visible tests, but success is measured by hidden tests. A single patch may choose the wrong design, miss a cross-file dependency, or overfit visible tests. A diverse nano pool may already contain a hidden-test-passing patch, but this latent capability matters only if the harness can recover it. Figure~\ref{fig:teaser_orchestration_scaling} shows
this effect: \texttt{GPT-5.4 nano} starts at \(67.0\%\), while our reviewer-comparator harness reaches \(76.4\%\), matching \texttt{Gemini 3 Pro} and \texttt{Claude Opus 4.5} Thinking and exceeding \texttt{GPT-5.4 mini}. The oracle best-of-$k$ curve reaches \(79.0\%\), showing that correct patches are often already in the nano proposal pool. Thus,
the harness-oracle gap diagnoses selection, while the oracle-stronger-model gap reflects remaining generation and shared-blind-spot limitations.

\paragraph{Contributions.}
This paper makes five contributions.
\begin{enumerate}[leftmargin=2em,label=(\roman*),nosep]

    \item \textbf{Inference-time boosting for reasoning language models.}
    We formalize verifier-backed sample--identify--advance systems over partial reasoning states as inference-time boosting of LLMs. We isolate four amplification quantities: proposal coverage, local identifiability, progress depth, and diversity.

    \item \textbf{Coverage does not imply identifiability.}
    We prove a black-box separation: nontrivial probability of generating progressing-sound moves does not by itself yield a useful critic or comparator. Reliable amplification requires an additional local identifiability signal, such as execution, proof checking, type checking, constraints, tests, or a learned reviewer.

    \item \textbf{Local-to-global oracle--identifiability bounds.}
    We decompose failure into an oracle miss term, asking whether any of the \(k\) proposals contains a progressing-sound move, and an identifiability miss term, asking whether finite critics/comparators recover one. Along rank-bounded trajectories these errors add:
    \(\Pr(\mathrm{failure})\le L(\eps_{\mathrm{orc}}(k)+\eps_{\mathrm{id}}(k,m,r))\), with \(\eps_{\mathrm{id}}(k,m,r)\lesssim k^2e^{-\beta m-2r\sigma^2}\).

    \item \textbf{Blind-spot ceilings and boostable capability.}
    We characterize the oracle miss term under a latent subpopulation model as \(\eps_{\mathrm{orc}}(k)=B+o(k)\), where \(B\) is blind-spot mass and \(o(k)\) is finite-sampling residual. Hence oracle best-of-\(k\) converges to \(1-B\), giving a formal boostable-capability ceiling for a proposal system.

    \item \textbf{Weak-to-frontier empirical amplification.}
    On SWE-bench Verified, critic--comparator orchestration lifts \texttt{GPT-5.4 nano} from weak one-shot performance to the level of substantially stronger standalone models. Ablations show the mechanism: diversity exposes latent correct patches, critics filter flawed candidates, comparators rank plausible alternatives, and remaining failures are mostly proposal-coverage failures.

\end{enumerate}

\section{Related Work}
\label{sec:related}

{\bf Boosting and inference-time amplification.\enspace}
Classical boosting converts weak predictive edges into strong predictors under
supervised feedback
\citep{Schapire1990,Freund1995Boosting,FreundSchapire1997}.
The analogy is useful but incomplete for verifier-backed reasoning: the system must generate a useful local move, identify it, and repeat this without losing progress. Prior work studies LLMs as weak learners, weak-to-strong generalization, and boosting-style uses of language models~\citep{Manikandan2023,Burns2024,Chen2025LLMBoost}. We instead study the black-box inference-time regime, where model weights are fixed and the question
is when repeated calls plus local selection amplify reasoning.

{\bf Sampling and test-time scaling.\enspace}
Many inference-time methods improve performance by sampling more candidates. Self-consistency, rationale ensembles, universal self-consistency, multi-agent voting, and large-scale repeated sampling all exploit the fact that useful answers may appear away from the greedy path~\citep{Wang2023,Wang2022RationaleEnsembles,
Chen2023UniversalSelfConsistency,Li2024MoreAgents,
Brown2024LargeLanguageMonkeys}. Recent work studies scaling laws for more LM calls, best-of-\(N\) selection, and broader test-time scaling
\citep{Chen2024MoreCalls,Huang2025BestOfN,
Zhang2025TestTimeScalingSurvey}. Our contribution is structural: sampling helps only when it exposes useful candidates, and its ceiling is set by shared blind
spots.

{\bf Selection, verifiers, and judges.\enspace}
A second line studies how to choose among generated candidates. Learned verifiers, process reward models, pairwise rankers, multi-verifier systems,
reward-model benchmarks, and tool-backed checking show that selection can be as important as generation
\citep{Cobbe2021,Lightman2024,Wang2024MathShepherd,Jiang2023LLMBlender, Lifshitz2025MAV,SaadFalcon2025Weaver,Zheng2023,Lambert2024RewardBench,
Gou2024CRITIC,Dhuliawala2024CoVe}. These works motivate our coverage--identifiability separation. A correct candidate in the sample pool is useful only if the harness has a critic, comparator, verifier, or test signal strong enough to recover it.

{\bf Search, agents, and compound systems.\enspace}
Inference-time reasoning often proceeds over partial states rather than flat answers. Least-to-most prompting, Tree of Thoughts, RAP, LATS, ReAct,
Reflexion, and Self-Refine use decomposition, search, feedback, planning, or tool interaction
\citep{Zhou2022LeastToMost,Yao2023ToT,Hao2023RAP,Zhou2023LATS,
Yao2023ReAct,Shinn2023Reflexion,Madaan2023SelfRefine}. Multi-agent and compound-system frameworks such as CAMEL, AutoGen, MetaGPT, Mixture-of-Agents, Archon, and Smoothie explore larger orchestration spaces
\citep{Li2023CAMEL,Wu2023AutoGen,Hong2024MetaGPT,
Wang2024MixtureAgents,SaadFalcon2024Archon,Guha2024Smoothie}. We isolate one mechanism inside this design space: verifier-backed committee search over
bounded-depth trajectories.

{\bf Verifier-backed benchmarks and oversight.\enspace}
Code and formal-reasoning benchmarks make this mechanism concrete because candidates can often be checked by tests, execution, types, proof checkers, or
other local signals. AlphaCode, Codex, SWE-bench, SWE-agent, AutoCodeRover, and Agentless all use some combination of sampling, localization, repair,
validation, and tool-backed selection
\citep{Li2022AlphaCode,Chen2021Codex,Jimenez2024SWEbench,
Yang2024SWEagent,Zhang2024AutoCodeRover,Xia2025Agentless}. Debate, prover-verifier games, and scalable oversight also decompose hard judgments into
simpler checks or comparisons, but usually study argument evaluation, supervision, or strategic interaction rather than verifier-backed local actions
\citep{Irving2018,Christiano2018,Du2024,Liang2024MAD,
BrownCohen2024,Anil2021,Kirchner2024,Kenton2024}. Our setting is narrower and more mechanistic: local moves, local signals, bounded progress, and measurable
blind spots.


\section{Verifier-Backed Committee Search}
\label{sec:framework}

We model verifier-backed agent systems as bounded-depth search over partial
objects with local progress. Let \(\X\) denote a family of tasks, such as
SWE-bench, and fix a task instance \(x\in\X\), for example a particular
SWE-bench problem.

Our setting is inspired by reinforcement learning and consists of states and
actions. We view an agentic workflow through the sequence of states it induces,
and the system stops when it reaches a terminal state. A state represents a
partial reasoning object, such as an intermediate proof state or a partially
written program together with its current specification.

Let \(\Sbb_x\) be a countable state space. Let
\(\Valid_x\subseteq\Sbb_x\) be the set of valid states, where validity means
that some correct completion remains possible. Let
\(s_0(x)\in\Valid_x\) be the initial state. Let
\(R_x:\Sbb_x\to\{0,1\}\) be a verifier over states, and let
\(\Term_x\subseteq\Valid_x\) be the set of terminal states. Terminal states are
accepted by the verifier, meaning that \(R_x(s)=1\) for all
\(s\in\Term_x\).

\begin{definition}[Setting: valid state system with progress]
\label{def:vss}
A \emph{valid state system} is a tuple containing $\Sbb_x, s_0(x), \Valid_x, \Term_x$, and action sets \(\Abb(s)\) where an action $a \in \Abb(s)$ leads to the next state, and a rank function $d_x:\Valid_x\to\{0,1,\ldots,L_x\}$ describing the "distance from solution" such that
\[ d_x(s)=0 \text{ for } s\in\Term_x, \quad \text{and} \quad d_x(s)\ge1 \text{ for } s\in\Valid_x\setminus\Term_x.
\]
A state is reachable if it is obtained from \(s_0(x)\) by finitely many actions.
\end{definition}

The rank certifies progress: if every chosen action decreases the rank while preserving validity, the process terminates within $L_x$ steps.

\begin{definition}[Progressing-sound actions]
\label{def:sound}
At a reachable valid nonterminal state $s$, an action $a\in\Abb(s)$ is \emph{progressing-sound} if $s^*_a$ obtained by using $a$ from $s$ satisfies
\[
s^*_a\in\Valid_x
\qquad\text{and}\qquad
 d_x(s^*_a)<d_x(s).
\]
Write the set of sound actions
$
\Sound_x(s):=\{a\in\Abb(s): s^*_a\in\Valid_x\text{ and }d_x(s^*_a)<d_x(s)\}.
$
\end{definition}

For example, in SWE-bench, the state contains the current repository worktree,
the issue, and the visible tests. A progressing-sound action is a code edit that
preserves some hidden-test-passing patch while reducing the remaining work, such
as by fixing failing visible tests. Visible tests, types, and linters reject many
unsound edits but do not certify correctness, since hidden tests may still fail.

{\bf Committee rotocol \(\Pi_{k,m,r}\).\enspace} At each reachable nonterminal state \(s\), the protocol:
\begin{enumerate}[label=(\arabic*),nosep,leftmargin=2em]
\item samples \(k\) candidate actions from the proposer harness;
\item applies \(m\) independent critic calls per candidate and discards candidates rejected at least once;
\item declares local failure if no candidate survives;
\item otherwise selects a Copeland winner among survivors using \(r\) comparator votes per pair, applies it, and repeats.
\end{enumerate}
\begin{wrapfigure}[9]{r}{0.6\textwidth}
\vspace{-0.8\baselineskip}
\centering
\begin{tikzpicture}[
  >={Stealth[length=1.2mm,width=1mm]},
  font=\sffamily,
  state/.style={
    rectangle, rounded corners=2pt,
    draw=cGrayDark, line width=0.4pt,
    fill=cGrayBg,
    minimum width=10mm, minimum height=5mm,
    inner sep=1.5pt,
    font=\sffamily\scriptsize
  },
  cand/.style={
    rectangle, rounded corners=2pt,
    draw=cPurpleDark, line width=0.3pt,
    fill=cPurpleBg,
    minimum width=7mm, minimum height=4.2mm,
    inner sep=0.5pt,
    font=\sffamily\scriptsize,
    text=cPurpleDark
  },
  pass/.style={cand, draw=cTealDark, fill=cTealBg, text=cTealDark},
  fail/.style={cand, draw=cRedDark, fill=cRedBg, text=cRedDark, dashed},
  win/.style={cand, draw=cAmberDark, fill=cAmberBg, text=cAmberDark, line width=0.5pt},
  arr/.style={->, draw=cGrayDark, line width=0.35pt},
  arrFail/.style={->, draw=cRedDark, line width=0.35pt, dashed},
  stage/.style={font=\sffamily\bfseries\scriptsize, text=cGrayDark, anchor=south},
]
\node[stage] at (1.05, 0.85) {propose ($k$)};
\node[stage] at (3.0,  0.85) {critique ($m$)};
\node[stage] at (4.95, 0.85) {compare ($r$)};
\node[state] (st) at (0, 0) {$s_t$};
\node[cand] (p1) at (1.05,  0.45) {$a_1$};
\node[cand] (p2) at (1.05,  0.0)  {$a_2$};
\node[cand] (p3) at (1.05, -0.45) {$a_3$};
\draw[arr] (st.east) -- (p1.west);
\draw[arr] (st.east) -- (p2.west);
\draw[arr] (st.east) -- (p3.west);
\node[pass] (c1) at (3.0,  0.45) {\checkmark};
\node[fail] (c2) at (3.0,  0.0)  {$\times$};
\node[pass] (c3) at (3.0, -0.45) {\checkmark};
\draw[arr]     (p1.east) -- (c1.west);
\draw[arrFail] (p2.east) -- (c2.west);
\draw[arr]     (p3.east) -- (c3.west);
\node[win] (w) at (4.95, 0) {$a_1$};
\draw[arr] (c1.east) to[out=0, in=160] (w.west);
\draw[arr] (c3.east) to[out=0, in=200] (w.west);
\node[state, fill=cTealBg, draw=cTealDark, text=cTealDark]
  (next) at (6.55, 0) {$s_{t+1}$};
\draw[arr, draw=cAmberDark, line width=0.5pt] (w.east) -- (next.west);
\draw[arr, draw=cGray]
  (next.south) to[out=-90, in=-90, looseness=0.35]
  node[midway, below=-1pt, font=\sffamily\scriptsize, text=cGrayDark]
    {repeat $L$ times}
  (st.south);
\end{tikzpicture}
\caption{One step of the committee protocol.}
\label{fig:committee}
\end{wrapfigure}

The protocol separates generation from identification. Proposers create breadth, critics remove locally refutable errors, and comparators select among surviving candidates. The theory below shows that this architecture amplifies weak local competence only when two distinct resources are present: proposal coverage and local identifiability.

\begin{assumption}[Per-state local coverage]
\label{ass:H1}
For each problem size $N$, there exists a proposer portfolio $P_N$ with $|P_N|=\poly(N)$ such that for every reachable valid nonterminal state $s$ and input $x$ of size $N$, some proposer policy or prompt $p_s\in P_N$ satisfies
\[
\alpha(s,p_s):=
\Pbb[\textnormal{LLM}(p_s)\text{ outputs an action in }\Sound_x(s)]
\geq\alpha_0>0.
\]
\end{assumption}

\begin{assumption}[Efficient local identifiability]
\label{ass:H2}
At every valid nonterminal state $s$, there exist polynomial-time randomized procedures $\mathsf{Crit}_s$ and $\mathsf{Comp}_s$ and constants $\beta_0,\sigma_0>0$ such that:
\begin{enumerate}[label=(\roman*),nosep]
\item $a\in\Sound_x(s)\Rightarrow \mathsf{Crit}_s(a)$ never emits a verified rejection;
\item $a\notin\Sound_x(s)\Rightarrow \Pbb[\mathsf{Crit}_s(a)=\textsc{reject}]\geq\beta_0$;
\item if $a\in\Sound_x(s)$ and $b\notin\Sound_x(s)$, then $\Pbb[\mathsf{Comp}_s(a,b)=a]\geq1/2+\sigma_0$.
\end{enumerate}
\end{assumption}

Assumption \ref{ass:H1} says that good moves can be generated. Assumption \ref{ass:H2} says that bad moves can be identified or ranked below good ones. These are different capabilities.

\section{From Coverage to Reliable Local Steps}
\label{sec:bridge}

Assumption \ref{ass:H1} is a generation condition, not a verification condition. It says that some portfolio policy can sample a progressing-sound action, but not how to recognize one. This section shows that sampling can amplify coverage but cannot manufacture critics or comparators; local identifiability must come from an additional soundness signal.

\begin{restatable}[Coverage does not imply local identifiability]{proposition}{blackboxprop}
\label{prop:blackbox}
For every $M\geq2$, there exists a one-step task family with action set
$\Abb=\{1,\ldots,M\}$ and hidden world parameter
$\theta\in\{1,\ldots,M\}$ such that the proposer distribution is
$\mathrm{Unif}(\Abb)$ in every world, the progressing-sound set is
$\Sound_\theta=\Abb\setminus\{\theta\}$, and no procedure observing only
candidate actions and polynomially many samples from the proposer distribution
has a uniform critic or comparator edge over all worlds.
\end{restatable}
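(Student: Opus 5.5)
The plan is an indistinguishability (symmetrization) argument. I would construct the family explicitly: a single nonterminal state $s_0$ with $d(s_0)=1$ and actions $\Abb=\{1,\dots,M\}$. In world $\theta$, every action $a\neq\theta$ leads to a valid terminal state with rank $0$, and action $\theta$ leads to an invalid state. The proposer distribution is $\mathrm{Unif}(\Abb)$ in every world by fiat. Then $\Sound_\theta=\Abb\setminus\{\theta\}$ holds, and coverage is strong: $\alpha=(M-1)/M\ge 1/2$. This also shows that Assumption~\ref{ass:H1} is satisfied with a trivial portfolio.

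The key observation is that the full view of any procedure in the allowed class does not depend on $\theta$. That view consists of the candidate action(s) it is asked to judge, its internal randomness, and any number of i.i.d.\ draws from the proposer distribution. Formally, for any procedure $\mathsf{Crit}$ and any input $a$, the joint law of the samples and coins is $\mathrm{Unif}(\Abb)^{\otimes n}\otimes\mathrm{coins}$ in every world. Hence $q(a):=\Pbb[\mathsf{Crit}(a)=\textsc{reject}]$ is a function of $a$ alone. The same holds for a comparator: $c(a,b):=\Pbb[\mathsf{Comp}(a,b)=a]$ is world-independent. The polynomial sample bound plays no real role, since even infinitely many samples carry zero information about $\theta$.

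Next I would derive a contradiction with each part of Assumption~\ref{ass:H2} holding uniformly over worlds. \emph{Critic:} fix any action $a$. Since $M\ge2$, there is a world $\theta'\neq a$ in which $a$ is sound, so (i) forces $q(a)=0$, i.e.\ $a$ is never rejected. In world $\theta=a$, however, $a$ is unsound, and (ii) demands $q(a)\ge\beta_0>0$. This is a contradiction for every $a$, so no $\beta_0>0$ works. \emph{Comparator:} take distinct $a,b$. In world $\theta=b$, (iii) needs $c(a,b)\ge 1/2+\sigma_0$. In world $\theta=a$, it needs $c(b,a)\ge 1/2+\sigma_0$. With the natural convention that the comparator outputs one of its two arguments, $c(a,b)+c(b,a)=1$ after symmetrizing argument order. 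If order-asymmetric behavior is allowed, I would instead average the edge over worlds. Uniformly over $\theta$, the expected comparator edge on a random sound-versus-unsound pair is exactly $0$, because swapping the roles of $a$ and $b$ corresponds to swapping worlds. So $\sigma_0\le 0$.

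The main obstacle is making the comparator claim airtight without assuming an order-symmetric comparator. If $\mathsf{Comp}(a,b)$ always returns its first argument, then $c(a,b)=1$ for all ordered pairs, and condition (iii) as literally stated only concerns ordered pairs $(a,b)$ with $a$ sound. In world $\theta=b$, that procedure would appear to have edge $1/2$ for every sound first argument. I would handle this by reading (iii) as applying to both presentation orders, which is the meaningful notion of edge. An equivalent fix is to require that, for the unordered pair $\{a,b\}$ presented in a random order, the sound element is selected with probability at least $1/2+\sigma_0$. Under either reading, the world-swap symmetry gives probability exactly $1/2$ when averaged over the two worlds $\theta\in\{a,b\}$. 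At least one of those worlds then violates the edge, which completes the separation.
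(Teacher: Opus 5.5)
Your proof is correct and follows the paper's argument: the transcript law does not depend on $\theta$, so a critic cannot reject action $i$ in world $\theta=i$ while never rejecting it in the worlds $\theta\neq i$, and a comparator run on a fixed presented pair $(i,j)$ cannot favor $j$ in world $\theta=i$ and also favor $i$ in world $\theta=j$. The presentation-order loophole you flag is real for the literal Assumption~\ref{ass:H2}(iii), since a comparator that always returns its first argument satisfies it. Your fix matches the paper's appendix information model, which requires the comparator to prefer the sound candidate \emph{whenever} one candidate is sound and the other is not, in either order; the paper then fixes a single presentation order to obtain the contradiction.
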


Thus Assumption \ref{ass:H1}$\Rightarrow(\beta,\sigma)$ fails in general. Local identifiability needs an accessible signal---proof checking, execution, type checking, constraint solving, or another certificate mechanism---that can reject or rank bad moves.

\begin{restatable}[Bridge theorem: coverage plus identifiability]{theorem}{bridgethm}
\label{thm:bridge}
Under Assumption \ref{ass:H1} with proposer portfolio $P_N$ and Assumption \ref{ass:H2} with edges $(\beta_0,\sigma_0)$. At every reachable valid nonterminal state $s$, round-robin assignment over the proposer portfolio with
$k\geq |P_N|\lceil \ln(1/\delta_{\mathrm{prop}})/\alpha_0\rceil$
proposer calls gives
\[
\alpha_{\mathrm{committee}}(s)
\geq
1-(1-\alpha_0)^{\lfloor k/|P_N|\rfloor}
\geq
1-\delta_{\mathrm{prop}}.
\]
The critic and comparator edges remain $\beta(s)\geq\beta_0$ and $\sigma(s)\geq\sigma_0$.
\end{restatable}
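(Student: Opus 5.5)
The plan is to make the statement's vague ``round-robin'' assignment concrete and then argue by independence. Fix a reachable valid nonterminal state \(s\) and an input \(x\) of size \(N\). Enumerate the portfolio as \(P_N=\{p^{(1)},\ldots,p^{(|P_N|)}\}\). The \(j\)-th proposer call, for \(j=1,\ldots,k\), is assigned to \(p^{(((j-1)\bmod |P_N|)+1)}\). With this assignment every policy in \(P_N\) gets at least \(q:=\lfloor k/|P_N|\rfloor\) calls. In particular the distinguished policy \(p_s\) from Assumption~\ref{ass:H1} does, even though the protocol does not need to know which policy \(p_s\) is. I will define \(\alpha_{\mathrm{committee}}(s)\) as the probability that at least one of the \(k\) proposals lies in \(\Sound_x(s)\). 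This is the event that matters for the later critic/comparator stage.

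The coverage bound comes next. I will assume that distinct LLM calls are independent given the state and the prompt, which the protocol needs anyway. The event ``no proposal is sound'' is contained in the event ``none of the \(q\) calls to \(p_s\) is sound.'' By independence and Assumption~\ref{ass:H1}, that second event has probability \(\prod (1-\alpha(s,p_s))\le(1-\alpha_0)^{q}\), which gives the first inequality. For the second inequality, the hypothesis \(k\ge |P_N|\lceil \ln(1/\delta_{\mathrm{prop}})/\alpha_0\rceil\) yields \(q\ge \lceil\ln(1/\delta_{\mathrm{prop}})/\alpha_0\rceil\). This holds because \(\lfloor k/|P_N|\rfloor\ge c\) whenever \(k\ge |P_N|c\) with \(c\) an integer. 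Then \((1-\alpha_0)^q\le e^{-\alpha_0 q}\le \delta_{\mathrm{prop}}\). Calls to other policies can only add sound proposals, so they never hurt the bound.

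The identifiability claims need no new argument. By Assumption~\ref{ass:H2}, the procedures \(\mathsf{Crit}_s,\mathsf{Comp}_s\) are defined per action, or per pair of actions, at the state \(s\). Their guarantees hold pointwise for every \(a\notin\Sound_x(s)\) and every pair \((a,b)\) with \(a\) sound and \(b\) unsound, whatever distribution or policy produced those candidates. Since the committee only changes how the candidates are generated, applying the same procedures to the pooled candidates keeps \(\beta(s)\ge\beta_0\) and \(\sigma(s)\ge\sigma_0\). I will also note that the procedures still run in polynomial time, because \(k\) is polynomial in \(N\) for fixed \(\alpha_0,\delta_{\mathrm{prop}}\).

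The main obstacle is interpretive rather than technical. The statement does not spell out the independence assumption or exactly what \(\alpha_{\mathrm{committee}}\) means. I will state both explicitly, and I will stress that the edges carry over unchanged precisely because they are pointwise in the candidates. Proposition~\ref{prop:blackbox} shows that this is the only route: sampling cannot improve the edges, so they have to be inherited from Assumption~\ref{ass:H2}.
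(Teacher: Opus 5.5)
Your proposal is correct and follows essentially the same argument as the paper's proof. Round-robin guarantees at least $\lfloor k/|P_N|\rfloor$ independent calls to the good prompt $p_s$, which gives $(1-\alpha_0)^{\lfloor k/|P_N|\rfloor}\le e^{-\alpha_0\lfloor k/|P_N|\rfloor}\le\delta_{\mathrm{prop}}$, and the critic and comparator edges are taken directly from Assumption~\ref{ass:H2}. Your explicit statements of the independence assumption and the definition of $\alpha_{\mathrm{committee}}(s)$ only make precise what the paper leaves implicit.
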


The theorem separates two resources: portfolio calls amplify the chance that a progressing-sound action appears, while critic and comparator edges come from Assumption \ref{ass:H2}. The proof is in Appendix~\ref{app:bridge}.

{\bf Verifier-backed instantiation.\enspace} If a one-sided local verifier can reject unsound moves, it supplies Assumption \ref{ass:H2}. This is stronger than final-answer verification: the decomposition must expose useful local checks.

\begin{assumption}[One-sided local verifier]
\label{ass:D3}
At every reachable valid nonterminal state $s$, there exists a poly-time randomized verifier
$V_x(s,a)\in\{\textsc{accept},\textsc{reject}\}$ such that
$a\in\Sound_x(s)$ implies $V_x(s,a)=\textsc{accept}$ almost surely, while
$a\notin\Sound_x(s)$ implies
$\Pbb[V_x(s,a)=\textsc{reject}]\geq1-\nu$.
\end{assumption}

\begin{restatable}[Verifier-backed bridge]{corollary}{verifierbridgecor}
\label{cor:bridge_D3}
Under Assumptions \ref{ass:H1}, \ref{ass:H2}, and \ref{ass:D3} holds with $\beta_0=1-\nu$ and $\sigma_0=(1-\nu)/2$.
\end{restatable}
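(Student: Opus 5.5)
The plan is to build the critic and the comparator directly from the one-sided verifier $V_x(s,\cdot)$ of Assumption~\ref{ass:D3}, and then read off the two edges. The corollary is really the statement that Assumption~\ref{ass:D3} supplies Assumption~\ref{ass:H2} with $\beta_0=1-\nu$ and $\sigma_0=(1-\nu)/2$. Once that is established, Theorem~\ref{thm:bridge} applies verbatim: the proposer-side conclusion $\alpha_{\mathrm{committee}}(s)\ge 1-\delta_{\mathrm{prop}}$ is untouched, and the edges become $\beta(s)\ge 1-\nu$ and $\sigma(s)\ge (1-\nu)/2$.

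\emph{Critic.} Define $\mathsf{Crit}_s(a)$ to be a single call of $V_x(s,a)$, emitting a verified rejection exactly when $V_x$ returns \textsc{reject}. This is polynomial time because $V_x$ is.
\begin{itemize}
\item If $a\in\Sound_x(s)$, then $V_x$ accepts almost surely, so no verified rejection is ever emitted. This is condition (i).
\item If $a\notin\Sound_x(s)$, then $\Pbb[\textsc{reject}]\ge 1-\nu$. This is condition (ii) with $\beta_0=1-\nu$.
\end{itemize}

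\emph{Comparator.} Define $\mathsf{Comp}_s(a,b)$ as follows:
\begin{enumerate}
\item run $V_x(s,a)$ and $V_x(s,b)$ independently;
\item if exactly one candidate is accepted, output it;
\item otherwise (both accepted or both rejected), output $a$ or $b$ with a fair coin.
\end{enumerate}
Now take $a\in\Sound_x(s)$ and $b\notin\Sound_x(s)$. Then $a$ is accepted almost surely. Let $q:=\Pbb[V_x(s,b)=\textsc{reject}]\ge 1-\nu$. The comparator outputs $a$ with certainty when $b$ is rejected, and with probability $1/2$ when $b$ is accepted, so
\[
\Pbb[\mathsf{Comp}_s(a,b)=a] = q+\tfrac12(1-q) = \tfrac12+\tfrac q2 \ge \tfrac12+\tfrac{1-\nu}{2}.
\]
This is condition (iii) with $\sigma_0=(1-\nu)/2$. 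The comparator is polynomial time, being two verifier calls plus a coin.

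\emph{Remaining checks.} Two points need care, though I expect neither to be a real obstacle, since the core is the one-line computation above.
\begin{itemize}
\item The construction must not depend on which argument is sound, since the comparator does not know this. The tie-breaking coin makes it symmetric, so the same bound holds for $\mathsf{Comp}_s(b,a)$.
\item The constants must be positive, as Assumption~\ref{ass:H2} demands $\beta_0,\sigma_0>0$. This requires $\nu<1$, which I would state explicitly as implicit in Assumption~\ref{ass:D3} (for $\nu\ge 1$ the verifier carries no signal).
\end{itemize}
Finally, invoke Theorem~\ref{thm:bridge} with these edges.
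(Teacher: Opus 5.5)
Your proposal is correct and matches the paper's proof: the verifier serves as the critic, giving $\beta_0=1-\nu$, and the comparator verifies both candidates, outputs the unique accepted one, and otherwise flips a fair coin, giving $(1-\nu)+\nu/2=\tfrac12+\tfrac{1-\nu}{2}$. Your explicit remark that $\nu<1$ is needed for $\beta_0,\sigma_0>0$ is a small point the paper leaves implicit.
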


The corollary uses the verifier as the critic. For comparison, we verify both candidates, choose the unrejected candidate when exactly one is rejected, and break ties uniformly. The full argument is in Appendix~\ref{app:bridge}.

\section{Amplification Along a Trajectory}
\label{sec:amplification}

The previous section bounds the local committee error. We now show how to reduce global failure to local committee error. For an input \(x\), define
\[
\mathrm{err}_x(k,m,r)
:=
\Pbb\!\left(R_x(\Pi_{k,m,r}(x))=0\right),
\]
the probability that the full protocol fails.

At a reachable valid nonterminal state \(s\), let
\[
\eps_{\mathrm{loc}}(s)
:=
\Pbb\!\left(A_t\notin\Sound_x(s)\mid S_t=s\right),
\]
where local failure is counted as selecting an unsound action. This is the one-step error of the committee. It has two sources:
\[
\eps_{\mathrm{loc}}(s)
\leq
\underbrace{\eps_{\mathrm{prop}}(k;s)}_{\text{no good proposal}}
+
\underbrace{k^2e^{-\beta m-2r\sigma^2}}_{\text{bad proposal survives and wins}}.
\]
The first term is proposal failure; the second is identification failure. If each reachable local step fails with probability at most \(\eps\) and the trajectory depth is at most \(L_x\), then $\mathrm{err}_x(k,m,r)\leq L_x\eps$.

\begin{restatable}[Adaptive cumulative-error bound]{lemma}{cumulativelem}
\label{lem:cumulative}
Let \(x\) have rank bound \(L_x\). Suppose that along the protocol trajectory, $\Pbb(A_t\notin\Sound_x(S_t)\mid\mathcal F_t)\leq\eps_t$
whenever \(S_t\in\Valid_x\setminus\Term_x\), with local failure counted as an invalid action. Then $\mathrm{err}_x(k,m,r) \leq \sum_{t=0}^{L_x-1} \Ebb[\eps_t]$. In particular, if \(\eps_t\leq\eps\) for all \(t\), then $\mathrm{err}_x(k,m,r)\leq L_x\eps$.
\end{restatable}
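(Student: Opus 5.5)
The plan is a union bound over the first time the trajectory selects a non-sound action. Let \(\tau\) be the first index \(t\) at which \(S_t\in\Valid_x\setminus\Term_x\) and \(A_t\notin\Sound_x(S_t)\); a declared local failure counts as such an action, and \(\tau=\infty\) if no such index exists. The aim is to show that failure of the protocol forces \(\tau\le L_x-1\), and then to bound \(\Pbb(\tau\le L_x-1)\) term by term.

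\textbf{Step 1 (no bad step implies success).} On the event \(\{\tau\ge L_x\}\), argue by induction on \(t\). Starting from \(s_0(x)\in\Valid_x\), each action taken at a valid nonterminal state is progressing-sound. By Definition~\ref{def:sound}, the next state is then valid and has strictly smaller rank. Since \(d_x\) takes values in \(\{0,\ldots,L_x\}\) and \(d_x(s_0(x))\le L_x\), after at most \(L_x\) sound steps we have \(d_x(S_t)=0\). By Definition~\ref{def:vss}, rank \(0\) among valid states means \(S_t\in\Term_x\), since nonterminal valid states have rank at least \(1\). The protocol stops there, and \(R_x=1\) because terminal states are accepted.

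Hence \(\{R_x(\Pi_{k,m,r}(x))=0\}\subseteq\{\tau\le L_x-1\}\). Two things must be checked here. First, the protocol really does halt upon reaching \(\Term_x\). Second, the stopped process can be handled by setting \(S_t\) to remain fixed after termination, so that later indices carry no further events.

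\textbf{Step 2 (union bound with conditional probabilities).} Write
\[
\{\tau\le L_x-1\}=\bigcup_{t=0}^{L_x-1}E_t,\qquad E_t:=\{S_t\in\Valid_x\setminus\Term_x,\ A_t\notin\Sound_x(S_t)\}.
\]
The event \(\{S_t\in\Valid_x\setminus\Term_x\}\) is \(\mathcal F_t\)-measurable. The tower property therefore gives
\[
\Pbb(E_t)=\Ebb\bigl[\mathbf 1\{S_t\in\Valid_x\setminus\Term_x\}\,\Pbb(A_t\notin\Sound_x(S_t)\mid\mathcal F_t)\bigr]\le\Ebb[\eps_t].
\]
Here \(\eps_t\) is allowed to be an \(\mathcal F_t\)-measurable random variable, which covers the adaptive case. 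Summing over \(t\) yields the bound, and the constant case \(\eps_t\le\eps\) gives \(L_x\eps\).

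\textbf{Main obstacle.} The delicate point is the bookkeeping of the filtration. We need \(\mathcal F_t\) to contain the history up to and including \(S_t\), so that the hypothesis applies conditionally. We must also define \(\eps_t\) and \(E_t\) on paths where the process has already stopped or left \(\Valid_x\); there we set \(E_t=\emptyset\) or \(\eps_t=0\), and either choice preserves the bound. A second point is that the hypothesis is only stated on valid nonterminal states. This is exactly why we restrict to the first bad index rather than requiring validity everywhere: before \(\tau\) every visited state is valid, so the hypothesis is in force.
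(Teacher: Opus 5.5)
Your proposal is correct and is essentially the paper's proof. The paper likewise defines per-step bad events $B_t$, taken to be empty off valid nonterminal states, and shows that on their complement the rank strictly decreases to an accepted terminal state within $L_x$ steps; it then applies the union bound with the tower property, and your first-bad-index $\tau$ is only a repackaging of $\bigcup_t B_t$.
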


At a fixed reachable valid nonterminal state $s$, let
$\eps_{\mathrm{prop}}(k;s)$ be the probability, conditional on reaching $s$, that none of the $k$ proposers outputs an action in $\Sound_x(s)$. Let $\eps_{\mathrm{loc}}(s)$ be the conditional probability that the local committee step either declares local failure or selects an action outside $\Sound_x(s)$.

\begin{restatable}[Local error decomposition]{theorem}{localerrorthm}
\label{thm:local}
At any reachable valid nonterminal state \(s\), under the local role model with critic edge \(\beta\) and comparator edge \(\sigma\), and under the conditional independence assumptions stated in Appendix~\ref{app:amplification},
\[
\eps_{\mathrm{loc}}(s)
\leq
\eps_{\mathrm{prop}}(k;s)+k^2(1-\beta)^m e^{-2r\sigma^2}
\leq
\eps_{\mathrm{prop}}(k;s)+k^2e^{-\beta m-2r\sigma^2}.
\]
If the proposer calls are conditionally independent and each succeeds with probability at least \(\alpha\), then
$\eps_{\mathrm{prop}}(k;s)\leq(1-\alpha)^k\leq e^{-\alpha k}$.
\end{restatable}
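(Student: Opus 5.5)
The bound comes from splitting the local failure event at $s$ by whether the proposal pool contains a progressing-sound action. Let $G$ be the event that at least one of the $k$ proposals lies in $\Sound_x(s)$. Then
\[
\eps_{\mathrm{loc}}(s)\le \Pbb(G^c)+\Pbb(\text{failure}\cap G),
\]
and $\Pbb(G^c)=\eps_{\mathrm{prop}}(k;s)$ by definition. If the proposer calls are conditionally independent and each succeeds with probability at least $\alpha$, the product rule gives $\Pbb(G^c)\le(1-\alpha)^k\le e^{-\alpha k}$, using $1-\alpha\le e^{-\alpha}$. The remaining work is to bound $\Pbb(\text{failure}\cap G)$ by $k^2(1-\beta)^m e^{-2r\sigma^2}$.

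On $G$, fix a sound candidate $a^\star$ in the pool (say the first one). By Assumption \ref{ass:H2}(i), the critic never rejects a sound action, so $a^\star$ survives all $m$ critic calls and local failure by empty survivor set cannot occur. The step can therefore fail only if the Copeland winner $w$ among survivors is unsound. I would reduce this to a union over unsound candidates $b$ in the pool: failure implies that some unsound $b$ survived all $m$ critic calls and was selected as the Copeland winner.

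For a single unsound $b$, independence of the critic calls and Assumption \ref{ass:H2}(ii) give survival probability at most $(1-\beta)^m$. The key step, and the main obstacle, is bounding the probability that a surviving unsound $b$ becomes the Copeland winner while the sound $a^\star$ also survives. I would first try the crude implication that $b$ can be a Copeland winner over a sound survivor only if it is not beaten by some sound survivor in their pairwise majority vote. In particular, $b$ must win or tie at least one such comparison. For a fixed pair $(a,b)$, the $r$ comparator votes are independent with $\Pbb[\text{vote}=a]\ge 1/2+\sigma$. Hoeffding's inequality then gives probability at most $e^{-2r\sigma^2}$ that $b$ gets at least half the votes.

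Here the statement's $k^2$ factor suggests a union over ordered pairs (sound $a$, unsound $b$) of candidate indices, rather than insisting that $b$ beat the particular $a^\star$. Rigorously, the needed fact is that a Copeland winner with a sound survivor present must have non-losing pairwise records against every sound survivor. This holds if all sound survivors must be beaten by it in the Copeland score ordering. If that is false in edge cases, I would argue instead that failure requires some unsound survivor not losing to some sound survivor, which the union over pairs still covers.

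Assuming the critic and comparator randomness are conditionally independent given the pool (the conditional independence assumptions referenced in Appendix~\ref{app:amplification}), each pair contributes at most $(1-\beta)^m e^{-2r\sigma^2}$, and there are at most $k^2$ pairs. This yields the first inequality. The second inequality follows from $(1-\beta)^m\le e^{-\beta m}$.
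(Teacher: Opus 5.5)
Your outline matches the paper's proof in structure. You split on whether the pool contains a progressing-sound candidate, and you use one-sided critics to rule out the empty-survivor case. You then bound each ordered (sound $a$, unsound $b$) pair by $(1-\beta)^m e^{-2r\sigma^2}$ via critic independence and Hoeffding on fresh comparator votes, take a union over at most $k^2$ pairs, and finish with $1-\beta\le e^{-\beta}$ and $(1-\alpha)^k\le e^{-\alpha k}$. The gap is the deterministic step linking these pairwise events to the Copeland winner. You flag it yourself as the main obstacle, but you leave it unresolved.

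Your first ``crude implication'' is false, and not only in edge cases. It claims that an unsound Copeland winner $b$ must win or tie against at least one sound survivor. Take one sound survivor $a$ and unsound survivors $b,c,d,e$, with $a\succ b$, $c,d,e\succ a$, $b\succ c,d,e$, and the cycle $c\succ d\succ e\succ c$. Then $b$ is the unique Copeland winner with score $3$, while every other candidate has at most $2$, even though $b$ lost to the only sound candidate. So you cannot attach the bad pairwise event to the winner or to your fixed $a^\star$.

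Your fallback claim, that some unsound survivor fails to lose to some sound survivor, is the right one. However, it needs proof, and that proof is exactly what the paper supplies. Work on the event that every surviving unsound candidate loses its majority comparison to every surviving sound candidate. Each sound survivor then beats all $U$ unsound survivors, so its Copeland score is at least $U$. Each unsound survivor can only collect wins against the other $U-1$ unsound survivors, so its score is at most $U-1$. Hence every Copeland winner is sound, regardless of tie-breaking. With this counting argument in place, your union over all ordered sound/unsound pairs gives exactly the stated bound.
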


\begin{tcolorbox}[
    enhanced,
    breakable,
    colback=cTealBg,
    colframe=cTealDark,
    boxrule=0.5pt,
    arc=2pt,
    left=3pt,
    right=3pt,
    top=5pt,
    bottom=5pt
]
{\bf Sizing rule for global success.\enspace}
Let \(\eps_{\mathrm{prop}}(k):=\sup_s \eps_{\mathrm{prop}}(k;s)\), where the
supremum is over reachable valid nonterminal states. Combining the cumulative
and local bounds gives
\[
\mathrm{err}_x(k,m,r)
\leq
L_x\bigl(\eps_{\mathrm{prop}}(k)+k^2e^{-\beta m-2r\sigma^2}\bigr).
\]
Thus, if $\eps_{\mathrm{prop}}(k)+k^2e^{-\beta m-2r\sigma^2} \leq \delta/L_x$, then the full depth-\(L_x\) protocol fails with probability at most \(\delta\).
\end{tcolorbox}
The bound also yields a standard polynomial-resource corollary when the rank, proposal coverage, critic/comparator edges, and per-call runtimes are
polynomially bounded; we state this consequence in
Appendix~\ref{app:vasi}.

\section{Oracle Error and Blind-Spot Limits}
\label{sec:blindspots}

Section~\ref{sec:amplification} reduced global failure to proposal failure plus identifiability failure. This section studies the proposal term. If no progressing-sound action appears among the \(k\) proposals, then even a perfect local identifier cannot help. The main point is that this oracle miss probability splits into an irreducible blind-spot floor and a finite-sampling residual. For extensions and additional details, see Appendix~\ref{app:robustness}.

\begin{restatable}[Local oracle miss and blind-spot floor]{lemma}{robustlem}
\label{lem:robust}
Fix a reachable valid nonterminal state \(s\). Suppose that, conditional on a latent variable \(Z\), the \(k\) proposal calls are independent and each lies in \(\Sound_x(s)\) with probability \(q_s(Z)\). Then
\[
\eps_{\mathrm{prop}}(k;s)
=
\Ebb[(1-q_s(Z))^k].
\]
Equivalently,
\[
\eps_{\mathrm{prop}}(k;s)=B_s+R_k(s),
\]
where
\[
B_s:=\Pbb(q_s(Z)=0),
\qquad
R_k(s):=\Ebb[(1-q_s(Z))^k\mathbf 1\{q_s(Z)>0\}].
\]
In particular, \(\eps_{\mathrm{prop}}(k;s)\to B_s\) as \(k\to\infty\).
\end{restatable}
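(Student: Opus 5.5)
The argument proceeds in three steps: condition on \(Z\), split the resulting expectation by whether \(q_s(Z)\) vanishes, and pass to the limit with dominated convergence.

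\textbf{Step 1: conditional miss probability.} Let \(E_k\) be the event that none of the \(k\) proposals lies in \(\Sound_x(s)\); by definition \(\eps_{\mathrm{prop}}(k;s)=\Pbb(E_k)\), where everything is conditional on reaching \(s\). Conditionally on \(Z\), the proposals are independent and each misses \(\Sound_x(s)\) with probability \(1-q_s(Z)\). Hence
\[
\Pbb(E_k\mid Z)=(1-q_s(Z))^k \quad \text{almost surely.}
\]
The tower property then gives \(\eps_{\mathrm{prop}}(k;s)=\Ebb[\Pbb(E_k\mid Z)]=\Ebb[(1-q_s(Z))^k]\). The only care needed is measurability: \(q_s\) must be a measurable function of \(Z\) taking values in \([0,1]\), which I take as part of the hypothesis. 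Then \((1-q_s(Z))^k\) is a bounded random variable and its expectation exists.

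\textbf{Step 2: splitting the expectation.} Write \(1=\mathbf 1\{q_s(Z)=0\}+\mathbf 1\{q_s(Z)>0\}\) inside the expectation. On the event \(\{q_s(Z)=0\}\) the integrand is identically \(1\), so that piece contributes exactly \(\Pbb(q_s(Z)=0)=B_s\). The other piece is \(R_k(s)\) by definition, which gives the stated decomposition.

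\textbf{Step 3: the limit.} For each \(z\) with \(q_s(z)\in(0,1]\), we have \((1-q_s(z))^k\mathbf 1\{q_s(z)>0\}\to 0\) as \(k\to\infty\). The integrand is bounded by \(1\) for every \(k\), so dominated (or bounded) convergence gives \(R_k(s)\to0\). Therefore \(\eps_{\mathrm{prop}}(k;s)\to B_s\).

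The main obstacle is only this interchange of limit and expectation. The convergence is pointwise but not uniform, since \(q_s(Z)\) can be arbitrarily close to \(0\), so no rate can be claimed in general; bounded convergence is exactly the right tool. As an optional remark, \(R_k(s)\) is nonincreasing in \(k\), so monotone convergence works equally well.
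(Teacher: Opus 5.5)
Your proof is correct and follows the paper's argument essentially step for step. Conditional independence gives the conditional miss probability \((1-q_s(Z))^k\), the tower property turns this into \(\eps_{\mathrm{prop}}(k;s)\), the expectation is split on \(\{q_s(Z)=0\}\), and dominated convergence with the bound \(1\) gives \(R_k(s)\to 0\).
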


The term \(B_s\) is the local blind-spot floor: on these latent subpopulations, the proposal system assigns zero probability to any progressing-sound action. The term \(R_k(s)\) is finite-sampling error on non-blind subpopulations. Thus more proposals can reduce \(R_k(s)\), but they cannot reduce \(B_s\). Reducing \(B_s\) requires changing the proposal system itself: the model, prompts, tools, retrieval, decomposition, or proposer diversity. An average coverage condition can still hide blind spots: it is possible that \(\Ebb[q_s(Z)]>0\) while \(q_s(Z)=0\) on a nontrivial subpopulation. The floor vanishes only under a stronger conditional coverage condition, for example \(q_s(Z)\ge\alpha_0\) almost surely. Lemma~\ref{lem:robust} is useful precisely because it separates average proposal success from latent subpopulations with zero proposal mass.

Let
\[
B:=\sup_s B_s,
\qquad
R_k:=\sup_s R_k(s),
\]
where the suprema range over reachable valid nonterminal states. Combining Lemma~\ref{lem:robust} with Theorem~\ref{thm:local} and the cumulative bound gives
\[
\mathrm{err}_x(k,m,r)
\le
L_x
\Big[
\underbrace{B}_{\textnormal{blind spots}}
+
\underbrace{R_k}_{\textnormal{finite proposal sampling}}
+
\underbrace{k^2e^{-\beta m-2r\sigma^2}}_{\textnormal{identifiability error}}
\Big].
\]
This is the main error decomposition. The first term is irreducible for a fixed proposal system, the second decreases with proposal width, and the third decreases with critic and comparator resources. The appendix gives the heterogeneous-portfolio version of Lemma~\ref{lem:robust}, finite-\(k\) convergence rates for \(R_k\), and the common-shock specialization.

The same calculation also gives the task-level oracle quantities used in evaluation. The theorem above is local: \(q_s(Z)\) is the probability of proposing a progressing-sound next action at state \(s\). For complete outputs, define
\[
p_1(P):=\Pbb_{x,Y\sim P(\cdot\mid x)}(R_x(Y)=1),
\qquad
p_{\mathrm{oracle}}(k;P)
:=
\Pbb(\exists i\le k:\ R_x(Y_i)=1).
\]
If \(q_P(Z):=\Pbb(R_x(Y)=1\mid Z)\), then
\[
p_{\mathrm{oracle}}(k;P)
=
1-\Ebb[(1-q_P(Z))^k],
\qquad
\lim_{k\to\infty}p_{\mathrm{oracle}}(k;P)
=
1-\Pbb(q_P(Z)=0).
\]
Precisely, pass@1 measures one-shot success, oracle best-of-\(k\) measures terminal boostable capability, and the limiting oracle curve measures the boostable ceiling of the proposal system.

For an implemented system evaluated on the same candidate pool, let \(p_{\mathrm{system}}(k,m,r;P)\) denote its success probability and define oracle-gap recovery by
\[
\mathrm{Rec}(k,m,r;P)
:=
\frac{p_{\mathrm{system}}(k,m,r;P)-p_1(P)}{p_{\mathrm{oracle}}(k;P)-p_1(P)},
\]
when the denominator is positive. This measures how much of the oracle-exposed capability is recovered by the implemented selector, rather than lost to imperfect critics, verifiers, or comparators.

This gives a role-wise benchmark decomposition. Pass@1 measures the \textit{one-shot capability} of the proposal system. Oracle best-of-\(k\) measures \textit{the capability that is latent in the proposal distribution} under perfect identification. The gap between oracle best-of-\(k\) and pass@1 measures \textit{how much there is to recover by boosting}. The gap between oracle best-of-\(k\) and the implemented system measures \textit{the remaining identifiability bottleneck}. Thus these quantities tell us whether failures come from weak proposal coverage, shared blind spots, or insufficient local identifiability.

\section{Experiments}
\label{sec:experiments}

We use SWE-bench Verified~\citep{Jimenez2024SWEbench} as a diagnostic testbed
for inference-time orchestration. Beyond testing whether additional calls improve
solve rate, we ask where the gains come from. In the theory's terminology, the
experiment separates proposal coverage, critic filtering, comparator ranking,
and residual shared blind spots.

The design estimates three role-wise quantities. First, the oracle best-of-\(k\)
curve measures proposal coverage: whether a correct patch appears in the
generated pool. Second, the gap between deployed orchestration and the oracle
measures harness recovery: how much latent pool capability the selector recovers.
Third, tasks unreachable by any generated proposal estimate the remaining
coverage failure, or shared-blind-spot mass, of the current proposer family.

\subsection{Setup}
\label{subsec:experiments_setup}

SWE-bench Verified contains \(500\) software-engineering tasks. Each task
provides a repository, an issue description, and visible tests, while success is
measured by held-out hidden tests. For each task, we generate a fixed pool of
\(k=8\) candidate patches using independent \texttt{GPT-5.4 nano} proposer runs. All
selector ablations reuse this same cached proposal pool. Thus, differences in
solve rate reflect differences in selection rather than differences in
generation.

The harness uses two local selection signals. First, binary critics evaluate
individual patches using local evidence such as the issue, the patch, visible
tests, and execution traces. A patch survives the critic gate if at least
\(\tau\) out of five critic votes judge it plausible. Second, pairwise
comparators rank candidate patches against each other. Comparator outcomes are
aggregated with a tournament rule to select a final patch.

We evaluate these signals separately and jointly. Critics-only selection tests
how far coarse patch filtering can go. Comparator-only selection tests whether
pairwise preferences can recover correct patches without a critic gate. The full
harness tests whether filtering and ranking are complementary. To reduce
presentation-order bias, each pair is compared in both patch orders. A pairwise
win is counted only if both orders select the same patch after mapping the
swapped response back to the original indices; disagreements and ties are
treated as ties.

We also report oracle-gap recovery. Let \(p_1\) denote the single-proposal solve
rate, \(p_{\mathrm{oracle}}(k)\) the oracle best-of-\(k\) solve rate, and
\(p_{\mathrm{system}}(k)\) the solve rate of the deployed harness using the
same $k$-proposal pool. We define $\mathrm{Rec}(k)
:=\frac{p_{\mathrm{system}}(k)-p_1}{p_{\mathrm{oracle}}(k)-p_1}$, whenever the denominator is positive. This measures how much of the latent
proposal-pool capability exposed by oracle best-of-\(k\) is recovered by the
actual selector.

We report hidden-test solve rate over all \(500\) tasks. We also report an
oracle best-of-\(k\) upper bound, which counts a task as solved if any of the
\(k\) generated patches passes the hidden tests. This oracle is not deployable,
since it uses hidden-test outcomes; it serves only as a diagnostic of latent
proposal coverage. Diagnostics requiring complete critic or comparator vote logs
are computed on the subset of tasks with complete traces, with the denominator
reported where applicable.

Full implementation details, including candidate generation, critic prompting,
comparator aggregation, tie handling, and evaluation protocol, appear in
Appendix~\ref{app:swebench_implementation}.

\subsection{Results}
\label{subsec:experiments_results}

\begin{figure}[t]
    \centering
    \begin{minipage}[t]{0.55\linewidth}
        \vspace{0pt}
        \centering
        \includegraphics[width=\linewidth]{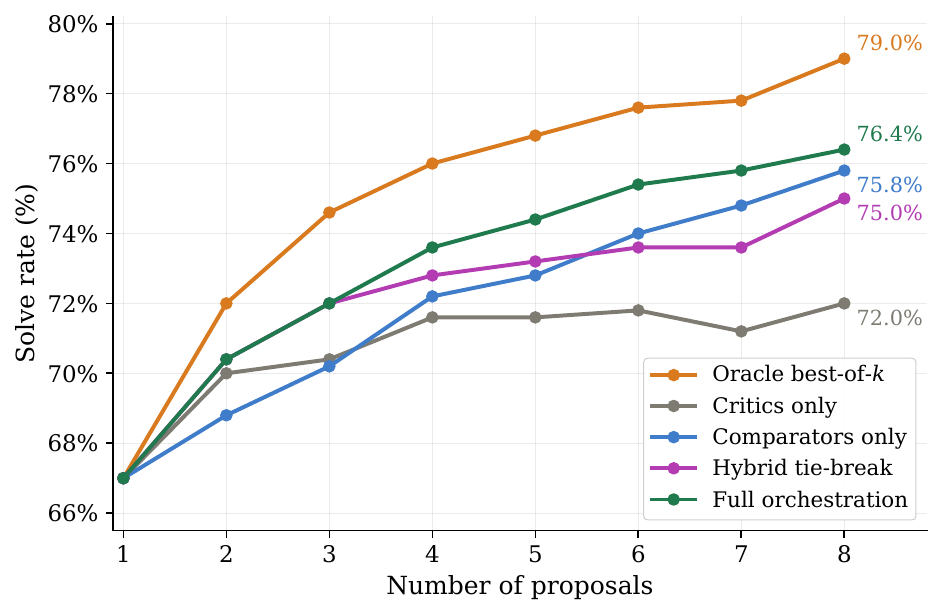}
    \end{minipage}
    \hfill
    \begin{minipage}[t]{0.4\linewidth}
        \vspace{0pt}
        \caption{
        \textbf{Scaling with the number of proposals and orchestration components.}
        Solve rate as the proposal budget \(k\) increases. The oracle
        best-of-\(k\) curve succeeds whenever any generated proposal solves the
        task. Full orchestration reaches \(76.4\%\) at \(k=8\), close to the
        \(79.0\%\) oracle upper bound, while component-only variants remain
        below the full system.
        }
        \label{fig:component_scaling}
    \end{minipage}
\end{figure}

\begin{figure}[t]
    \centering
    \begin{minipage}[t]{0.55\linewidth}
        \vspace{0pt}
        \centering
        \includegraphics[width=\linewidth]{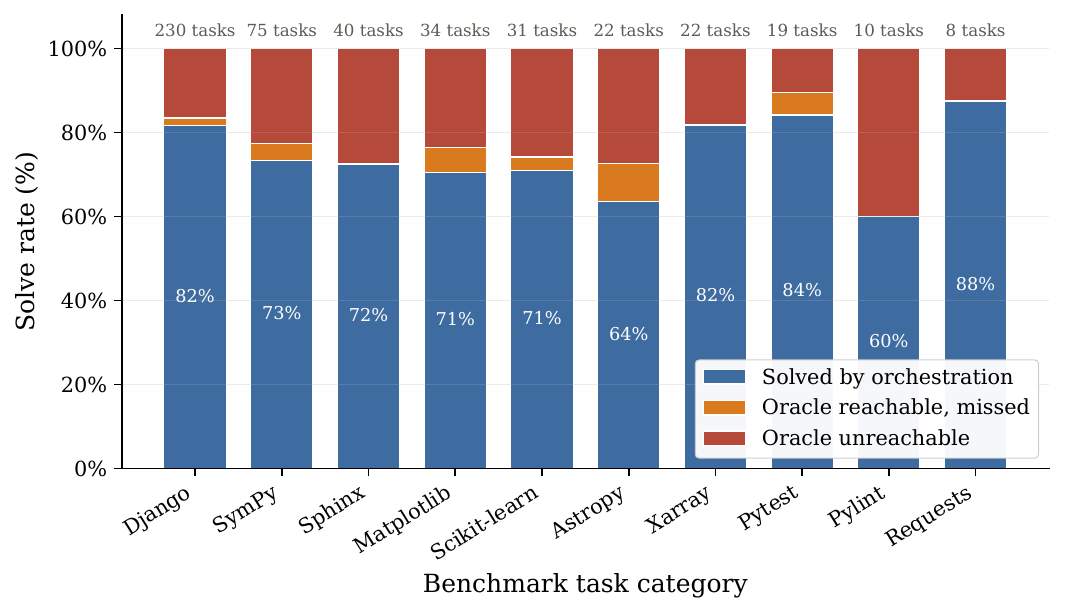}
    \end{minipage}
    \hfill
    \begin{minipage}[t]{0.41\linewidth}
        \vspace{0pt}
        \caption{
        \textbf{Failure decomposition by benchmark category.}
        For each category, we decompose tasks into those solved by orchestration,
        those that were oracle reachable but missed by selection, and those
        that were oracle unreachable under the proposal budget. The small
        oracle-reachable-but-missed segments indicate that most remaining
        failures are coverage failures rather than selection failures.
        }
        \label{fig:category_failure_decomposition}
    \end{minipage}
\end{figure}

{\bf Proposal diversity exposes latent correct patches, but does not by itself
solve selection.\enspace}
Figure~\ref{fig:component_scaling} shows the central scaling behavior. With a single nano proposal, solve rate is substantially below the oracle best-of-\(k\) curve. As the proposal budget increases, the oracle rises to \(79.0\%\), showing that many tasks already contain a hidden-test-passing patch somewhere in the nano-generated pool. This is the proposal-coverage effect: additional calls increase the probability that a correct patch is present.

However, oracle best-of-\(k\) is not a deployable system. The deployed harness
must identify the correct patch without access to hidden tests. Full
orchestration reaches \(76.4\%\) at \(k=8\), recovering most of the gap between
one-shot nano performance and the oracle. Thus the main empirical effect is not
``more samples'' alone. The additional samples expose latent capability, while
the critic--comparator harness converts a large fraction of that latent
capability into realized solve rate.

{\bf Critics and comparators are complementary.\enspace}
The component ablations show that neither selection signal is sufficient on its
own. Critics-only selection improves over the single-proposal baseline but
plateaus well below the full system. This suggests that binary critics are useful
for rejecting clearly flawed patches, but are too coarse to reliably choose among
multiple plausible candidates. Comparator-only selection is stronger, indicating
that direct pairwise ranking carries substantial information about patch quality.
Still, it remains below full orchestration, showing that pairwise comparison also
benefits from first removing implausible candidates.

This supports the role-wise picture from the theory. Critics provide a local
filtering signal: they reduce the number of obviously bad actions that can win.
Comparators provide a local ranking signal: they decide among candidates that
survive this coarse filter. Full orchestration works best because it composes
these two forms of identifiability.

{\bf Conservative comparator aggregation matters.\enspace}
The pairwise comparator is noisy and can be sensitive to presentation order. We
therefore query each unordered pair in both orders, once with \(p_i\) shown as
Patch A and once with \(p_j\) shown as Patch A, and map the second judgment back
to the original patch indices. A pairwise comparison is counted as a win only
when both orders agree on the same patch. If the two orders disagree, or if
either order returns \(\mathrm{TIE}\), the pairwise outcome is treated as a tie.

This conservative rule is important conceptually. It treats comparator decisions
as weak local evidence rather than as ground truth. A patch receives a pairwise
win only when the preference is robust to a superficial change in presentation.
Thus the comparator stage is not merely a majority vote over judge outputs; it is
a debiased local-identification mechanism. The strong performance of the
comparator and full-orchestration curves suggests that much of the oracle gap can
be recovered by stable pairwise preferences, while the remaining gap reflects
cases where either no correct proposal was generated or the available local
signals were insufficient to identify it.

{\bf Remaining failures are mostly coverage failures, not selection failures.\enspace}
Figure~\ref{fig:category_failure_decomposition} decomposes full-budget failures by
benchmark category. For each category, we separate tasks solved by orchestration,
tasks that were oracle reachable but missed by the selector, and tasks that were
oracle unreachable because none of the generated proposals passed the hidden
tests. The oracle-reachable-but-missed region is relatively small compared with
the oracle-unreachable region in most categories. This indicates that, after
critic--comparator orchestration recovers most of the best-of-\(k\) gain, the
dominant remaining bottleneck is proposal coverage.

This is exactly the blind-spot diagnosis predicted by the theory. If no proposal
in the pool is correct, no selector can recover a correct patch. Further gains
therefore require either more diverse proposers, stronger proposal mechanisms, or
new ways of targeting slices on which the current nano proposer family has low
correct-patch probability. Stronger selectors may still help on the
oracle-reachable-but-missed tasks, but they cannot close the oracle-unreachable
gap.

\section{Discussion and Limitations}
\label{sec:discussion}

The main lesson is that weak-model failure is often not a lack of information,
but a failure to select it. On SWE-bench Verified, hidden-test-passing patches
often appear in a pool of \texttt{GPT-5.4 nano} proposals even when a single
nano call fails. Thus the central question is not only whether weak models can
generate correct solutions, but whether an inference-time harness can identify
them among several imperfect candidates.

Our results show that this selection problem is substantially solvable, but not
for free. Critics remove candidates with clear local defects, while comparators
rank the remaining plausible patches. Together, they recover most of the
oracle best-of-\(k\) gap, turning latent proposal-pool capability into actual
solve rate. This explains how a committee of weak nano calls can approach much
stronger standalone models: sampling exposes correct patches, and local
selection makes them usable.

The same decomposition also identifies the ceiling. When a correct patch is in
the pool but the harness chooses another one, the bottleneck is identifiability:
better critics, comparators, tests, or aggregation can help. When no correct
patch appears, the bottleneck is proposal coverage: no selector can recover an
absent solution. Our remaining failures are mostly of this second kind, so
further gains likely require more diverse proposers, stronger proposal
mechanisms, or targeted methods for hard slices where the current proposer
family has shared blind spots. More broadly, verifier-backed orchestration is
most natural when tasks provide useful local evidence, and its gains must be
weighed against added model calls, verification cost, latency, and system
complexity.


\newpage

\bibliographystyle{unsrt}
\bibliography{references}

\newpage

\appendix

\section{Broader Impacts}
\label{app:broader_impacts}

This work studies inference-time amplification for verifier-backed reasoning
tasks, including software engineering, theorem proving, and program synthesis.
A positive implication is that stronger performance may be obtained from weaker
model calls by sampling multiple candidates and selecting among them with local
critics, comparators, tests, proof checkers, or constraint solvers. This could
make capable reasoning systems more accessible without requiring the training or
deployment of substantially larger models.

The main risk is that the same selection mechanism can amplify harmful uses of
reasoning models. If a weak model can occasionally generate a useful candidate,
a committee system may make that capability more reliable, including in
dual-use settings such as code generation, exploit repair, or automated search
over formal constraints. Our analysis also cautions against overtrusting such
systems: when all proposals share a blind spot, critics and comparators can only
select among flawed candidates. Deployment should therefore use sandboxing,
access controls, provenance logs, held-out verification, and human review when
outputs affect security, privacy, or critical infrastructure.

\section{Polynomial-Resource Consequence of the Main Amplification Bound}
\label{app:vasi}

Section~\ref{sec:amplification} shows that the failure probability of
\(\Pi_{k,m,r}\) is controlled by two local quantities: proposer failure and
identification failure. In particular, the main text gives the bound
\[
\mathrm{err}_x(k,m,r)
\leq
L_x\bigl(\eps_{\mathrm{prop}}(k)+k^2e^{-\beta m-2r\sigma^2}\bigr).
\]
This appendix records the corresponding polynomial-resource regime. The result
is only a sufficient condition: it identifies when the committee protocol gives
polynomial reliable amplification, but does not imply that all verifier-backed
tasks satisfy these assumptions.

\begin{definition}[Committee-decomposable verifier family]
\label{def:decompfamily}
A family of search relations
\(\mathcal F=\{R_x:x\in\X_N,\ N\in\mathbb N\}\) is
\emph{committee-decomposable} if every instance \(x\in\X_N\) admits the
ingredients used in Sections~\ref{sec:framework}--\ref{sec:amplification}:
\begin{enumerate}[label=(\roman*),nosep,leftmargin=2em]
    \item a valid state system with rank bound \(L_x\leq\poly(N)\);
    \item a proposer portfolio \(P_N\) of size \(\poly(N)\) satisfying Assumption \ref{ass:H1};
    \item efficient local identifiability satisfying Assumption \ref{ass:H2};
    \item proposer, critic, comparator, and verifier calls with runtime
    polynomial in \(N\).
\end{enumerate}
\end{definition}

\begin{definition}[Polynomial reliable amplification]
\label{def:vasi}
Fix a class \(\mathfrak F\) of search-relation families. A committee protocol
achieves \emph{polynomial reliable amplification} over \(\mathfrak F\) if, for
every \(\mathcal F\in\mathfrak F\), every instance \(x\in\X_N\), and every
\(\delta\in(0,1)\), it outputs a witness \(y\) satisfying \(R_x(y)=1\) with
probability at least \(1-\delta\), using at most
\(\poly(N,\log(1/\delta))\) resources whenever such a witness exists.
\end{definition}

\begin{theorem}[Polynomial amplification for committee-decomposable families]
\label{thm:vasi}
Let \(\mathfrak F\) be a class of committee-decomposable verifier families, and
let \(x\in\X_N\) be an instance with rank bound \(L_x\). Suppose the proposer
harness has a uniform proposer-failure bound \(\eps_{\mathrm{prop}}(k)\) over
all reachable valid nonterminal states, and suppose local identifiability holds
with edges \((\beta_0,\sigma_0)\). If
\begin{equation}
\label{eq:vasi_sizing}
\eps_{\mathrm{prop}}(k)
+
k^2\exp(-\beta_0m-2\sigma_0^2r)
\leq
\frac{\delta}{L_x},
\end{equation}
then \(\Pi_{k,m,r}\) outputs a witness \(y\) satisfying \(R_x(y)=1\) with
probability at least \(1-\delta\). The all-pairs implementation uses
\begin{equation}
\label{eq:calls}
O\!\left(L_x(k+mk+rk^2)\right)
\end{equation}
role calls.

In particular, if \(L_x\), \(|P_N|\), per-call runtime,
\(1/\alpha_0\), \(1/\beta_0\), and \(1/\sigma_0\) are polynomially bounded, and
if \(\eps_{\mathrm{prop}}(k)\) can be driven below the target accuracy with
polynomially many proposer calls, then the family is solvable with
\(\poly(N,\log(1/\delta))\) resources.
\end{theorem}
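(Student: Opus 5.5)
The proof chains the earlier results. It has three parts: correctness, the call count, and the polynomial-resource consequence.

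\emph{Correctness.} At every reachable valid nonterminal state $s$, Theorem~\ref{thm:local} with edges $(\beta_0,\sigma_0)$ from Assumption~\ref{ass:H2} gives
\[
\eps_{\mathrm{loc}}(s)\le \eps_{\mathrm{prop}}(k;s)+k^2e^{-\beta_0 m-2r\sigma_0^2}\le \eps_{\mathrm{prop}}(k)+k^2e^{-\beta_0 m-2r\sigma_0^2}=:\eps .
\]
The second inequality uses the uniform proposer-failure bound. Because $\eps$ does not depend on $s$, the conditional bound $\Pbb(A_t\notin\Sound_x(S_t)\mid\mathcal F_t)\le\eps$ holds along the trajectory whenever $S_t\in\Valid_x\setminus\Term_x$. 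Lemma~\ref{lem:cumulative} then gives $\mathrm{err}_x(k,m,r)\le L_x\eps$, and the sizing condition \eqref{eq:vasi_sizing} makes this at most $\delta$.

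The step needing care is that success of the protocol coincides with output of a verified witness. On the event that every chosen action is progressing-sound, the rank strictly decreases while validity is preserved. By Definition~\ref{def:vss} this means a terminal state is reached within $L_x$ steps, and every terminal state satisfies $R_x=1$. I will state explicitly that the protocol halts on reaching $\Term_x$ and outputs that state as $y$. I will also make sure the conditional-independence hypotheses of Theorem~\ref{thm:local} hold conditionally on $\mathcal F_t$; fresh randomness at each step suffices.

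\emph{Call count.} Each step uses $k$ proposer calls, $mk$ critic calls, and at most $r\binom{k}{2}\le rk^2$ comparator calls over all survivor pairs. With at most $L_x$ steps, this totals $O(L_x(k+mk+rk^2))$.

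\emph{Polynomial consequence.} This is the step I expect to need the most bookkeeping. I choose $k$ polynomial so that $\eps_{\mathrm{prop}}(k)\le\delta/(2L_x)$. Under Assumption~\ref{ass:H1} with round-robin assignment, Theorem~\ref{thm:bridge} gives
\[
k=|P_N|\lceil\ln(2L_x/\delta)/\alpha_0\rceil ,
\]
which is $\poly(N,\log(1/\delta))$. Next I take
\[
m=\lceil\ln(2L_xk^2/\delta)/\beta_0\rceil
\]
and $r=0$, or $r$ of similar logarithmic order in $1/\sigma_0^2$, so that the identification term is also at most $\delta/(2L_x)$. Every quantity involved ($L_x$, $k$, $1/\beta_0$, $\log(1/\delta)$, and $\log k$) is polynomial in $N$ and $\log(1/\delta)$. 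Substituting into \eqref{eq:calls} and multiplying by the polynomial per-call runtime gives the claimed $\poly(N,\log(1/\delta))$ resource bound.
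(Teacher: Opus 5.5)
Your proposal is correct and follows the paper's proof essentially step for step: Theorem~\ref{thm:local} plus the uniform proposer-failure bound gives a state-independent local error of at most $\delta/L_x$, Lemma~\ref{lem:cumulative} turns this into global failure at most $\delta$, the call count is $k+mk+r\binom{k}{2}$ per step over at most $L_x$ steps, and your choice $k=|P_N|\lceil\ln(2L_x/\delta)/\alpha_0\rceil$ matches the paper's explicit parameter sizing. The only difference is cosmetic: you put the whole identification budget into $m$ (which permits $r=0$), whereas the paper splits $\log(2k^2L_x/\delta)$ evenly between $\beta_0 m$ and $2\sigma_0^2 r$, and both choices satisfy the sizing condition.
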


\begin{proof}
Fix an instance \(x\in\X_N\) with rank bound \(L_x\), and suppose
Eq.~\eqref{eq:vasi_sizing} holds:
\[
\eps_{\mathrm{prop}}(k)+k^2e^{-\beta_0m-2\sigma_0^2r}
\leq
\frac{\delta}{L_x}.
\]
By Theorem~\ref{thm:local}, every reachable valid nonterminal state \(s\) has
local error
\[
\eps_{\mathrm{loc}}(s)
\leq
\eps_{\mathrm{prop}}(k)+k^2e^{-\beta_0m-2\sigma_0^2r}
\leq
\frac{\delta}{L_x}.
\]
The bound is uniform over reachable states, so Lemma~\ref{lem:cumulative} gives
\[
\Pbb(R_x(\Pi_{k,m,r}(x))=0)
\leq
L_x\cdot\frac{\delta}{L_x}
=
\delta.
\]
Thus \(\Pi_{k,m,r}\) outputs a witness \(y\) satisfying \(R_x(y)=1\) with
probability at least \(1-\delta\).

For the role-call bound, each step uses \(k\) proposer calls, \(mk\) critic
calls, and at most \(r\binom{k}{2}=O(rk^2)\) comparator calls in the all-pairs
implementation. Since every successful trajectory has length at most \(L_x\),
the total number of role calls is
\[
O\!\left(L_x(k+mk+rk^2)\right).
\]
\end{proof}

\begin{proof}
Fix an instance \(x\in\X_N\) with rank bound \(L_x\), and suppose
Eq.~\eqref{eq:vasi_sizing} holds:
\[
\eps_{\mathrm{prop}}(k)+k^2e^{-\beta_0m-2\sigma_0^2r}
\leq
\frac{\delta}{L_x}.
\]
By Theorem~\ref{thm:local}, every reachable valid nonterminal state \(s\) has
local error
\[
\eps_{\mathrm{loc}}(s)
\leq
\eps_{\mathrm{prop}}(k)+k^2e^{-\beta_0m-2\sigma_0^2r}
\leq
\frac{\delta}{L_x}.
\]
The bound is uniform over reachable states, so Lemma~\ref{lem:cumulative} gives
\[
\Pbb(R_x(\Pi_{k,m,r}(x))=0)
\leq
L_x\cdot\frac{\delta}{L_x}
=
\delta.
\]
Thus \(\Pi_{k,m,r}\) outputs a witness \(y\) satisfying \(R_x(y)=1\) with
probability at least \(1-\delta\).

For the role-call bound, each step uses \(k\) proposer calls, \(mk\) critic
calls, and at most \(r\binom{k}{2}=O(rk^2)\) comparator calls in the all-pairs
implementation. Running the bounded-depth protocol for at most \(L_x\) steps
therefore uses
\[
O\!\left(L_x(k+mk+rk^2)\right)
\]
role calls.
\end{proof}

{\bf Explicit parameter choices for the polynomial-resource claim.\enspace} Under the round-robin portfolio assignment from Theorem~\ref{thm:bridge}, the
good prompt \(p_s\in P_N\) is queried at least
\(\lfloor k/|P_N|\rfloor\) times at every reachable state. Hence
\[
\eps_{\mathrm{prop}}(k)
\leq
(1-\alpha_0)^{\lfloor k/|P_N|\rfloor}
\leq
\exp\!\left(
-\alpha_0\left\lfloor\frac{k}{|P_N|}\right\rfloor
\right).
\]
Thus it is enough to choose
\[
k
\geq
|P_N|
\left\lceil
\frac{\log(2L_x/\delta)}{\alpha_0}
\right\rceil .
\]
For the identification term, it suffices that
\[
k^2e^{-\beta_0m-2\sigma_0^2r}
\leq
\frac{\delta}{2L_x},
\]
equivalently,
\[
\beta_0m+2\sigma_0^2r
\geq
\log\frac{2k^2L_x}{\delta}.
\]
One separated choice is
\[
m
\geq
\left\lceil
\frac{1}{2\beta_0}
\log\frac{2k^2L_x}{\delta}
\right\rceil,
\qquad
r
\geq
\left\lceil
\frac{1}{4\sigma_0^2}
\log\frac{2k^2L_x}{\delta}
\right\rceil .
\]
Therefore, if \(L_x\), \(|P_N|\), per-call runtime, \(1/\alpha_0\),
\(1/\beta_0\), and \(1/\sigma_0\) are polynomially bounded, the required
parameters and total runtime are polynomial in \(N\) and \(\log(1/\delta)\).

The theorem makes explicit the efficient regime implicit in the main
amplification bound. Inference-time boosting can fail to be polynomial for
several different reasons: exponentially small proposal mass, weak critic or
comparator edges, long progress depth, or expensive verification. Thus the
result applies to the locally identifiable core of a task, not automatically to
every task with a final verifier.

\section{Implementation Details}
\label{app:swebench_implementation}

This appendix describes the implementation used for the SWE-bench Verified
experiments in Section~\ref{sec:experiments}. The experiments are designed to
separate proposal coverage from selection quality. We therefore first generate
a fixed pool of candidate patches for each task and then reuse this same pool
across all selector ablations.

\subsection{Benchmark and Evaluation}

We evaluate on SWE-bench Verified~\citep{Jimenez2024SWEbench}, which contains
\(500\) real software-engineering tasks. Each task provides a repository state
and an issue description. A submitted patch is counted as correct only if it
passes the benchmark's held-out hidden tests. All headline solve rates are
computed over the full set of \(500\) tasks.

Candidate patches are evaluated with the SWE-bench evaluation harness. Hidden
test outcomes are used only for scoring and for oracle diagnostics. They are
never exposed to the deployed selector.

\subsection{Proposal Generation}

For each task \(x\), we generate a pool of \(k=8\) candidate patches
\[
\mathcal P(x)=\{p_1(x),\ldots,p_k(x)\}
\]
using independent \texttt{GPT-5.4 nano} proposer calls. Each proposer is implemented as
a mini-swe-agent trajectory over the repository. The agent receives the issue
and interacts with the repository environment to produce a patch. The proposer
runs are independent, and their outputs are cached before any selector
ablation is performed.

This fixed-pool design is important. All selector variants, including
binary-judge-only selection, comparator-only selection, hybrid tie-breaking,
and full orchestration, select from the same candidate patches. Therefore
differences in solve rate come from the selection rule rather than from changes
in proposal generation.

\subsection{Oracle Best-of-$k$}

The oracle best-of-$k$ score measures latent proposal coverage. For each task,
we evaluate every cached proposal against the hidden tests and count the task as
oracle solved if at least one proposal succeeds:
\[
\mathbf 1_{\mathrm{oracle}}(x)
=
\mathbf 1\{\exists j\le k:\ p_j(x)\text{ passes the hidden tests}\}.
\]
This oracle is not deployable because it uses hidden-test outcomes. It is used
only as a diagnostic: it measures whether the proposal pool contains a correct
patch somewhere, independently of whether the selector can identify that patch.

\subsection{Binary Patch Judges}

The single-patch selection signal is implemented as a binary patch judge. For
each candidate patch, the judge receives the issue text and the proposed patch
and answers whether the patch resolves the issue without breaking previously
working behavior. Each judgment returns a binary decision, a short rationale,
and a confidence score. The prompt asks the judge to trace the failure mode from
the issue to the code path and decide whether the patch changes the relevant
behavior.

For each task and patch, multiple independent binary judgments are cached. A
binary-judge-only selector ranks patches by the number of positive judgments,
with ties broken by the lowest proposal index. We also evaluate a
confidence-weighted variant, where positive judgments are weighted by the
reported confidence. These cached judgments are reused for voter-count and
proposal-budget ablations.

\subsection{Pairwise Comparators}

The pairwise comparator evaluates two candidate patches for the same issue.
Given the issue and two patches, the comparator first states a hypothesis for
the failure mode, lists the relevant changes made by each patch, and then
decides whether each patch is consistent with the hypothesized failure and
whether it introduces collateral behavioral changes. It then outputs a winner
in \(\{A,B,\mathrm{TIE}\}\) together with a confidence score.

Pairwise comparator outcomes are aggregated into a tournament. A
Copeland-style rule scores each candidate by its pairwise wins and selects the
highest-scoring candidate. Ties in the final Copeland score are broken
deterministically by the lowest proposal index in the offline ablations. This
gives a comparator-only selector when applied to the full proposal pool.

\subsection{Hybrid and Full Orchestration Selectors}

The hybrid selector combines the binary patch judge with pairwise comparators.
First, the binary judge scores each patch. Patches that pass a fixed yes-rate
threshold are retained as candidates for pairwise comparison. If no patch passes
the gate, the selector falls back to the highest-scoring patch under the binary
judge. Pairwise comparator aggregation is then applied only to the retained
candidate set.

The full orchestration system uses the same cached proposal pool and combines
the binary-judge signal with pairwise comparator aggregation. Thus, the
comparison between binary-judge-only, comparator-only, hybrid tie-breaking, and
full orchestration isolates the value of different selection signals while
holding proposal generation fixed.

\subsection{Proposal-Budget Ablations}

To study the effect of proposal diversity, we compute solve rate as a function
of the proposal budget \(k\). For each \(k\), the selector is restricted to a
subset of \(k\) proposals from the cached \(k=8\) pool. The oracle best-of-\(k\)
curve is computed using hidden-test outcomes for those \(k\) proposals. The
implemented selector curves are computed using only cached binary-judge and
comparator signals.

For subset-based ablations, results are averaged over proposal subsets of the
same size when applicable. This avoids attributing performance changes to an
arbitrary ordering of proposal indices.

\subsection{Vote-Count Ablations}

The binary-judge and comparator vote caches also allow offline vote-count
ablations. For binary judges, we recompute the selector using subsets of the
available judge votes and report the resulting solve rate. For comparators, we
similarly recompute the Copeland tournament using subsets of the available
pairwise votes. These ablations test how much selection accuracy depends on the
number of local judgments rather than on additional proposal generation.

\subsection{Failure Decomposition}

For the category-level decomposition in
Figure~\ref{fig:category_failure_decomposition}, each task is assigned to one
of three mutually exclusive groups:
\begin{itemize}[leftmargin=2em]
    \item \textbf{Solved by orchestration:} the patch selected by the deployed
    orchestration system passes the hidden tests.

    \item \textbf{Oracle reachable, missed:} at least one generated proposal
    passes the hidden tests, but the deployed selector does not select a
    passing patch.

    \item \textbf{Oracle unreachable:} none of the generated proposals passes
    the hidden tests.
\end{itemize}
This decomposition separates selection failures from proposal-coverage
failures. An oracle-reachable-but-missed task is a selection failure: the
correct patch was present in the proposal pool, but the selector failed to
recover it. An oracle-unreachable task is a coverage failure: no selector could
solve the task using the available proposal pool.

\subsection{Reported Quantities}

The main metric is hidden-test solve rate. For a deployed selector \(A\), the
solve rate is
\[
\frac{1}{500}\sum_x
\mathbf 1\{A(x)\text{ passes the hidden tests}\}.
\]
We also report the oracle best-of-$k$ solve rate and the oracle-gap recovery
ratio
\[
\frac{p_{\mathrm{system}}(k)-p_1}
     {p_{\mathrm{oracle}}(k)-p_1},
\]
where \(p_1\) is the single-proposal baseline,
\(p_{\mathrm{oracle}}(k)\) is the oracle best-of-$k$ rate, and
\(p_{\mathrm{system}}(k)\) is the implemented orchestration rate. This ratio
measures how much of the latent best-of-$k$ improvement is recovered by the
deployed selector.

\subsection{Prompts and Parsing}
\label{app:prompts_tiebreaking}

This subsection gives the selector prompts and parsing rules used in the
offline SWE-bench Verified selector experiments. The proposer patches are
generated once and cached; the prompts below are used only for selecting among
the cached proposals.

{\bf Binary patch-judge prompt.\enspace}
The binary patch judge evaluates one patch at a time. The prompt asks whether
the patch resolves the issue and preserves previously passing behavior.

\begin{promptbox}[Binary patch-judge prompt]
You are evaluating a single patch against the GitHub issue it claims to fix.
Output yes if and only if running the patched code passes the failing test
described in the issue and does not break any test that previously passed.

Trace the failure path from the issue to the originating code. Check whether
the patch modifies that code path. Then state whether the patch produces the
expected behavior on the smallest input that exhibits the failure.

ISSUE:
{problem_statement}

PROPOSED PATCH:
{patch}

Respond with EXACTLY this JSON, no markdown fences, no extra text:
{
  "resolves": <true|false>,
  "reasoning": "<one sentence: smallest input under original code produces X; under patch produces Y>",
  "confidence": <integer 1-5>
}
\end{promptbox}

The issue text is truncated to \(2000\) characters and the proposed patch to
\(8000\) characters. Each patch receives multiple independent binary judgments.
Malformed responses, refusals, or parse failures are treated as abstentions and
do not contribute positive votes.

{\bf Binary-judge aggregation.\enspace}
For each patch, we compute the number of positive votes,
\[
s_{\mathrm{yes}}(p)=\#\{v:\ v.\mathrm{resolves}=\mathrm{true}\}.
\]
The majority binary selector chooses the patch with the largest
\(s_{\mathrm{yes}}\). Ties are broken by the lowest proposal index. We also
compute a confidence-weighted score
\[
s_{\mathrm{conf}}(p)=
\sum_{v:\ v.\mathrm{resolves}=\mathrm{true}} v.\mathrm{confidence},
\]
again breaking ties by the lowest proposal index. Unless otherwise stated, the
main binary selector uses the yes-count rule.

{\bf Pairwise comparator prompt.\enspace}
The comparator evaluates two patches directly. It is instructed to prefer the
patch that minimally resolves the issue, rather than the patch that is longer,
more ambitious, or more visually complex.

\begin{promptbox}[Pairwise comparator prompt]
You are deciding which of two patches is more likely to make the failing
tests pass without breaking the passing tests. That is the only question
you are answering. Aesthetics, ambition, and visible effort are not relevant.

Do NOT select a patch because it makes more changes, addresses more cases,
or appears more thorough. The patch that minimally resolves the stated
issue is preferred. A 4kb patch that fixes the bug beats an 11kb patch
that doesn't.

ISSUE:
{problem_statement}

PATCH A:
{patch_a}

PATCH B:
{patch_b}

REQUIRED STRUCTURAL COMPARISON (fill these in BEFORE deciding):

  1. Failing test hypothesis: state the smallest hypothesis about what is
     wrong, derived from the issue text alone. One sentence. This is the
     ground truth you compare both patches against.

  2. A_changes: list the specific lines/functions Patch A modifies.
  3. B_changes: list the specific lines/functions Patch B modifies.

  4. A_consistent_with_hypothesis: do A's changes plausibly cause the
     failing test in the issue to start passing? (true/false + one-line
     justification)
  5. B_consistent_with_hypothesis: same question for B.

  6. A_collateral: does A change behavior on inputs unrelated to the
     failure mode? (true/false + one-line justification)
  7. B_collateral: same question for B.

The decision falls out of this comparison; do not pull a winner from prior.
If exactly one patch is consistent with the hypothesis and the other is
not, that one wins. If both are consistent, prefer the one with less
collateral. If both fail the hypothesis, output TIE. If they are
functionally equivalent (same lines changed differently, or different
lines that produce the same behavior), output TIE.

Respond in this EXACT JSON format -- JSON only, no markdown fences, no extra text:
{
  "hypothesis": "<one sentence>",
  "a_changes": "<files/functions modified by A>",
  "b_changes": "<files/functions modified by B>",
  "a_consistent": <true|false>,
  "b_consistent": <true|false>,
  "a_collateral": <true|false>,
  "b_collateral": <true|false>,
  "winner": "A" | "B" | "TIE",
  "confidence": <integer 1-5>,
  "reasoning": "<one sentence: why the winner falls out of the structural comparison>"
}
\end{promptbox}

As in the binary patch-judge prompt, the issue text is truncated to \(2000\)
characters and each patch to \(8000\) characters. Responses are parsed as JSON.
If the response is malformed after the allowed retries, the comparator judgment
is treated as missing and handled as a tie in the offline aggregation.

\subsection{Selection, Aggregation, and Tie-Breaking}
\label{app:selection_aggregation_ties}

{\bf Position-swap debiasing.\enspace}
For each unordered pair of patches \((p_i,p_j)\), the comparator is queried in
both orders: first with \(p_i\) as Patch A and \(p_j\) as Patch B, and then with
the positions swapped. The swapped response is mapped back to the original
patch indices before aggregation. This position-swap protocol reduces
lead-position bias and prevents the selector from favoring a patch because it
was shown first rather than because it better addresses the issue.

In the conservative offline rule, a pairwise comparison is counted as a win for
\(p_i\) only if both position orders select \(p_i\), and as a win for \(p_j\)
only if both position orders select \(p_j\). If the two orders disagree, or if
either response is missing or tied, the aggregated pairwise outcome is treated
as a tie.

{\bf Comparator aggregation.\enspace}
Pairwise comparator outcomes are aggregated with a Copeland-style tournament.
For each unordered pair \((p_i,p_j)\), the aggregated pairwise outcome is one of
\(p_i\) wins, \(p_j\) wins, or tie. A win contributes one point to the winning
patch; a tie contributes no win to either patch in the offline aggregation. The
selected patch is the one with the largest Copeland score. Ties in the final
Copeland score are broken deterministically by the lowest proposal index.

{\bf Hybrid gated comparator.\enspace}
The hybrid selector first applies the binary patch judge and retains patches
whose yes-rate exceeds a threshold \(\tau\). If no patch passes the gate, the
selector falls back to the highest-scoring patch under the binary judge.
Pairwise comparator aggregation is then applied only to the retained patches.
Unless otherwise specified, the main gated-comparator ablations use
\(\tau=0.5\). Threshold-sweep experiments are treated as offline diagnostics.

{\bf Parse failures and abstentions.\enspace}
All selector prompts require strict JSON outputs. If a response cannot be
parsed after the allowed retries, the corresponding judgment is treated as an
abstention. For the binary patch judge, abstentions are not counted as positive
votes. For pairwise comparison, missing or unparseable comparator outputs are
treated as ties in the offline aggregation. This prevents malformed outputs from
creating artificial wins.

{\bf Oracle diagnostics.\enspace}
The oracle best-of-$k$ diagnostic is computed only after hidden-test
evaluation. It counts a task as solved if any cached proposal passes the hidden
tests. The deployed selector never observes these hidden-test outcomes.
Therefore, at a fixed proposal budget $k$, the gap between oracle
best-of-$k$ and implemented orchestration measures missed selection
opportunities: cases where a correct patch was present but not selected. The
remaining gap between oracle best-of-$k$ and perfect performance reflects
finite-budget proposal-coverage failures, including possible shared blind spots
of the proposer pool.

\subsection{Selector Ablations}
\label{app:selector_ablations}

Table~\ref{tab:selector_ablations} gives additional selector ablations for the
SWE-bench Verified experiment. These ablations use the same cached \(k=8\)
\texttt{GPT-5.4 nano} proposal pool as the main experiment, so changes in solve rate
reflect the selector rather than proposal generation.

The first block varies the comparator aggregation rule using \texttt{GPT-5.4 nano}
comparators with five comparator calls per pairwise matchup. Copeland
round-robin and strict dominance both reach \(75.8\%\), while the
single-elimination bracket drops to \(74.6\%\). This supports the main-text
claim that all-pairs aggregation preserves useful pairwise evidence better than
cheaper tournament structures.

The second block varies the critic-gate threshold. With no critic gate, pure
Copeland over the eight proposals reaches \(75.8\%\). Dropping only patches with
zero critic support raises performance to \(76.4\%\), matching the best
full-harness result. Increasing the threshold beyond \(\tau\ge2\) slightly
reduces performance. Thus the critic stage is most useful as a permissive coarse
filter, not as a strict correctness certificate.

\begin{table}[t]
\centering
\small
\caption{
\textbf{Selector ablations on SWE-bench Verified.}
All rows use the same \(k=8\) \texttt{GPT-5.4 nano} proposal pool. Tournament rows vary
the \texttt{GPT-5.4 nano} comparator aggregation rule with \(5\) comparator calls per
pairwise matchup. Threshold rows vary the five-critic gate with \(5\) critics
and \(5\) comparators. Oracle-gap recovery is computed using the metric defined
in Section~\ref{subsec:experiments_setup}.
}
\label{tab:selector_ablations}
\begin{adjustbox}{max width=\linewidth}
\begin{tabular}{llcc}
\toprule
Ablation & Setting & Resolve rate & Oracle-gap recovery \\
\midrule
Tournament rule & Copeland round-robin & \(75.8\%\) & \(73.3\%\) \\
Tournament rule & Sequential king & \(75.6\%\) & \(71.7\%\) \\
Tournament rule & Strict dominance & \(75.8\%\) & \(73.3\%\) \\
Tournament rule & Single-elim bracket & \(74.6\%\) & \(63.3\%\) \\
\midrule
Critic threshold & \(\tau \geq 0\) no gate & \(75.8\%\) & \(73.3\%\) \\
Critic threshold & \(\tau \geq 1\), drop 0-yes patches & \(\mathbf{76.4\%}\) & \(\mathbf{78.3\%}\) \\
Critic threshold & \(\tau \geq 2\), tied with \(\tau\geq1\) & \(\mathbf{76.4\%}\) & \(\mathbf{78.3\%}\) \\
Critic threshold & \(\tau \geq 3\), starts losing 1 instance & \(76.2\%\) & \(76.7\%\) \\
Critic threshold & \(\tau \geq 4\) & \(76.2\%\) & \(76.7\%\) \\
Critic threshold & \(\tau \geq 5\), unanimous gate & \(76.0\%\) & \(75.0\%\) \\
\bottomrule
\end{tabular}
\end{adjustbox}
\end{table}

\section{Proofs for Section~\ref{sec:bridge}}
\label{app:bridge}

This section proves the bridge results. The central point is that proposer coverage and local identifiability are logically distinct. Coverage says that good actions can be sampled with nontrivial probability. Identifiability says that bad actions can be rejected or ranked below good ones. The latter cannot be obtained from black-box sampling alone.

\subsection{Formal information model for Proposition~\ref{prop:blackbox}}

A local identification procedure is allowed to know the family construction, the action set $\Abb$, and the proposer distribution $\mu$. It may observe any finite list of candidate actions and polynomially many i.i.d. samples from $\mu$. It does not observe the hidden world parameter $\theta$.

A critic edge is required to hold uniformly over worlds: for every $\theta$, the critic must never reject actions in $\Sound_\theta$ and must reject actions outside $\Sound_\theta$ with probability bounded away from zero. A comparator edge is also required to hold uniformly: for every $\theta$, whenever one candidate is in $\Sound_\theta$ and the other is not, the comparator must prefer the sound candidate with probability at least $1/2+\sigma$ for some $\sigma>0$.

\subsection{Proof of Proposition~\ref{prop:blackbox}}

\blackboxprop*

\begin{proof}
Fix $M\geq2$. Let the action set be $\Abb=\{1,\ldots,M\}$ and let the hidden world parameter be $\theta\in\{1,\ldots,M\}$. In world $\theta$, define
\[
\Sound_\theta=\Abb\setminus\{\theta\}.
\]
Let the proposer distribution be
\[
\mu=\mathrm{Unif}(\Abb)
\]
in every world. Then, for every $\theta$,
\[
\Pbb_{a\sim\mu}(a\in\Sound_\theta)=\frac{M-1}{M}=1-\frac1M.
\]
Thus the proposer succeeds with probability $\alpha_0=1-1/M$ in every world.

The observable transcript of any procedure that sees only candidate actions and samples from $\mu$ is independent of $\theta$, because the proposer distribution is identical in every world. We use this to rule out uniform critic and comparator edges.

First consider critics. Fix an action $i\in\Abb$. In world $\theta=i$, the action $i$ is unsound. In any world $\theta=j\neq i$, the same action $i$ is sound. Since the transcript distribution is identical in these two worlds, the critic has the same probability of rejecting $i$ in both worlds. Uniform one-sided soundness requires that this rejection probability be zero in every world where $i$ is sound. Hence the rejection probability for $i$ must be zero in every world $\theta\neq i$. But the transcript distribution is the same in world $\theta=i$, so the rejection probability is also zero in the world where $i$ is unsound. Therefore no critic can reject the unsound action with probability at least $\beta>0$ uniformly over worlds.

Now consider comparators. Fix two distinct actions $i,j\in\Abb$. In world $\theta=i$, action $i$ is unsound and action $j$ is sound, so a comparator with edge $\sigma>0$ must prefer $j$ to $i$ with probability at least $1/2+\sigma$. In world $\theta=j$, action $j$ is unsound and action $i$ is sound, so the comparator must prefer $i$ to $j$ with probability at least $1/2+\sigma$. But the observable transcript distribution is identical in these two worlds. Therefore the probability of preferring $i$ over $j$ must be the same in both worlds. It cannot be both at least $1/2+\sigma$ and at most $1/2-\sigma$. Hence no uniform comparator edge exists.

Thus proposer coverage can be arbitrarily strong while local identifiability is impossible in this black-box information model.
\end{proof}

\subsection{Proof of Theorem~\ref{thm:bridge}}

\bridgethm*

\begin{proof}
Fix a reachable valid nonterminal state \(s\). By Assumption~\ref{ass:H1}, there exists a policy or prompt \(p_s\in P_N\) such that
\(\Pbb(\mathrm{LLM}(p_s)\in\Sound_x(s))\geq \alpha_0\).
Under round-robin assignment, if the committee makes $k$ proposer calls, then \(p_s\) is used at least \(q=\lfloor k/|P_N|\rfloor\) times. Since these calls use fresh independent randomness, the probability that none of them returns an action in \(\Sound_x(s)\) is at most \((1-\alpha_0)^q\). Hence
\[
\alpha_{\mathrm{committee}}(s)
\geq
1-(1-\alpha_0)^{\lfloor k/|P_N|\rfloor}.
\]

If \(k\geq |P_N|\left\lceil \alpha_0^{-1}\ln(1/\delta_{\mathrm{prop}})\right\rceil\), then
\(\lfloor k/|P_N|\rfloor\geq \alpha_0^{-1}\ln(1/\delta_{\mathrm{prop}})\). Using \(1-u\leq e^{-u}\) for \(u\in[0,1]\),
\[
(1-\alpha_0)^{\lfloor k/|P_N|\rfloor}
\leq
\exp\!\left(-\alpha_0\left\lfloor k/|P_N|\right\rfloor\right)
\leq
\delta_{\mathrm{prop}}.
\]
Therefore \(\alpha_{\mathrm{committee}}(s)\geq 1-\delta_{\mathrm{prop}}\).

The critic and comparator guarantees are not derived from Assumption \ref{ass:H1}; they are exactly the local-identifiability guarantees supplied by Assumption~\ref{ass:H2}. Thus \(\beta(s)\geq\beta_0\) and \(\sigma(s)\geq\sigma_0\). This proves the theorem.
\end{proof}

\subsection{Proof of Corollary~\ref{cor:bridge_D3}}

\verifierbridgecor*

\begin{proof}
Use the verifier \(V_x(s,a)\) as the critic. If \(a\in\Sound_x(s)\), Assumption~\ref{ass:D3} gives \(V_x(s,a)=\textsc{accept}\) almost surely, so the critic never rejects a progressing-sound action. If \(a\notin\Sound_x(s)\), Assumption~\ref{ass:D3} gives \(\Pbb(V_x(s,a)=\textsc{reject})\geq 1-\nu\). Hence the critic edge is \(\beta_0=1-\nu\).

For comparison, suppose \(a\in\Sound_x(s)\) and \(b\notin\Sound_x(s)\). Verify both candidates. The sound candidate \(a\) is never rejected, while \(b\) is rejected with probability at least \(1-\nu\). If \(b\) is rejected, choose \(a\); if both candidates are accepted, break the tie uniformly at random. Therefore the probability of choosing the sound candidate is at least
\[
(1-\nu)+\frac{\nu}{2}
=
\frac12+\frac{1-\nu}{2}.
\]
Thus the comparator edge is \(\sigma_0=(1-\nu)/2\). This proves the corollary.
\end{proof}

\subsection{Why disagreement-based identification is insufficient}
\label{app:d1d2}

One might try to identify bad actions by comparing them to a sampled reference action. This is not valid without a much stronger canonical-witness assumption. In many verifier-backed domains there are many distinct progressing-sound actions at the same state. Two proof steps may differ syntactically while both preserve solvability and make progress. Two program continuations may use different implementations while both satisfy the local specification. Therefore token agreement, syntactic proximity, or agreement with one sampled reference is not a reliable proxy for soundness.

A theorem based on sampled-reference agreement would require an additional assumption, such as the existence of a canonical progressing-sound witness at each state and a known representation in which soundness is equivalent, or nearly equivalent, to agreement with that witness. That is much stronger than Assumption \ref{ass:H1}. Without such an assumption, Proposition~\ref{prop:blackbox} rules out deriving local identification from coverage alone.

\section{Proofs for Section~\ref{sec:amplification}}
\label{app:amplification}

This section proves the cumulative and local error bounds. We explicitly include the local-failure/no-survivor case in the bad event.

\subsection{Proof of Lemma~\ref{lem:cumulative}}

\cumulativelem*

\begin{proof}
For each \(t<L_x\), let \(B_t\) be the event that, when \(S_t\in\Valid_x\setminus\Term_x\), the protocol either declares local failure or selects an action \(A_t\notin\Sound_x(S_t)\). If \(S_t\notin\Valid_x\setminus\Term_x\), set \(B_t=\emptyset\). Equivalently, local failure may be represented as selecting a formal invalid absorbing action. By assumption, \(\Pbb(B_t\mid \mathcal F_t)\leq \eps_t\).

Let \(G=\bigcap_{t=0}^{L_x-1}B_t^c\). On \(G\), whenever the protocol is at a valid nonterminal state, it selects an action in \(\Sound_x(S_t)\). By Definition~\ref{def:sound}, the next state remains valid and the rank strictly decreases:
\[
S_{t+1}=\Phi_x(S_t,A_t)\in\Valid_x,
\qquad
d_x(S_{t+1})<d_x(S_t).
\]
Since \(d_x\) is a nonnegative integer with rank bound \(L_x\), the protocol reaches a terminal state within at most \(L_x\) progressing steps. By Definition~\ref{def:vss}, every terminal state \(s\in\Term_x\) satisfies \(R_x(\Out_x(s))=1\). Therefore,
\[
\{R_x(\Pi_{k,m,r}(x))=0\}
\subseteq
\bigcup_{t=0}^{L_x-1}B_t .
\]
The union bound and tower property give
\[
\Pbb(R_x(\Pi_{k,m,r}(x))=0)
\leq
\sum_{t=0}^{L_x-1}\Pbb(B_t)
=
\sum_{t=0}^{L_x-1}\Ebb[\Pbb(B_t\mid\mathcal F_t)]
\leq
\Ebb\!\left[\sum_{t=0}^{L_x-1}\eps_t\right].
\]
If \(\eps_t\leq \eps\) almost surely for every \(t\), this becomes
\[
\Pbb(R_x(\Pi_{k,m,r}(x))=0)\leq L_x\eps .
\]
This proves the lemma.
\end{proof}

\subsection{Proof of Theorem~\ref{thm:local}}

\localerrorthm*

\begin{proof}
Fix a reachable valid nonterminal state \(s\), and let \(C_1,\ldots,C_k\) be the proposed candidates. Define
\[
E_{\mathrm{prop}}
:=
\{C_i\notin\Sound_x(s)\text{ for all }i=1,\ldots,k\}.
\]
By definition, \(\Pbb(E_{\mathrm{prop}})=\eps_{\mathrm{prop}}(k;s)\). If \(E_{\mathrm{prop}}\) occurs, then no progressing-sound candidate is available, so the protocol may either select an unsound action or declare local failure; both are catastrophic local errors and are charged to \(\eps_{\mathrm{prop}}(k;s)\).

Now condition on \(E_{\mathrm{prop}}^c\). At least one progressing-sound candidate has been proposed. Since critics are one-sided, no progressing-sound candidate is ever rejected, so at least one progressing-sound candidate survives criticism and the no-survivor case cannot occur.

Fix an ordered pair \((a,b)\) of proposed candidates with \(a\in\Sound_x(s)\) and \(b\notin\Sound_x(s)\). The \(m\) critics assigned to \(b\) reject it independently with probability at least \(\beta\), so
\[
\Pbb(b\text{ survives all }m\text{ critics})
\leq
(1-\beta)^m.
\]
Conditioned on \(b\) surviving, each of the \(r\) fresh comparator votes prefers \(a\) to \(b\) with probability at least \(1/2+\sigma\), independently across votes. Let \(Y_j\in\{0,1\}\) indicate whether vote \(j\) prefers \(a\) to \(b\). Since \(\Ebb[Y_j]\geq 1/2+\sigma\), the unsound candidate \(b\) wins the majority comparison only if \(\frac1r\sum_{j=1}^rY_j\leq 1/2\), where ties may be broken adversarially against \(a\). By Hoeffding's inequality,
\[
\Pbb\!\left(\frac1r\sum_{j=1}^rY_j\leq\frac12\right)
\leq
e^{-2r\sigma^2}.
\]
Since the comparison randomness is fresh relative to the critic randomness, for this fixed ordered pair,
\[
\Pbb(b\text{ survives criticism and defeats }a)
\leq
(1-\beta)^m e^{-2r\sigma^2}.
\]
There are at most \(k^2\) ordered sound/unsound pairs, so by a union bound the probability that some unsound candidate survives criticism and defeats some sound candidate is at most
\[
k^2(1-\beta)^m e^{-2r\sigma^2}.
\]
Call this event \(E_{\mathrm{id}}\).

If neither \(E_{\mathrm{prop}}\) nor \(E_{\mathrm{id}}\) occurs, then at least one sound candidate survives, and every surviving unsound candidate loses its pairwise comparison to every surviving sound candidate. We claim that every Copeland winner is then sound. Let \(S\geq 1\) be the number of surviving sound candidates and \(U\geq 0\) the number of surviving unsound candidates. Every sound candidate beats all \(U\) unsound candidates, so its Copeland score is at least \(U\). Every unsound candidate loses to all \(S\) sound candidates, so it can only score wins against other unsound candidates and has Copeland score at most \(U-1\). Thus no unsound candidate can be a Copeland winner.

Therefore catastrophic local error can occur only if \(E_{\mathrm{prop}}\) or \(E_{\mathrm{id}}\) occurs. Hence
\[
\eps_{\mathrm{loc}}(s)
\leq
\eps_{\mathrm{prop}}(k;s)+k^2(1-\beta)^m e^{-2r\sigma^2}.
\]
Using \(1-\beta\leq e^{-\beta}\),
\[
\eps_{\mathrm{loc}}(s)
\leq
\eps_{\mathrm{prop}}(k;s)+k^2e^{-\beta m-2r\sigma^2}.
\]
Finally, if the \(k\) proposer calls are conditionally independent and each succeeds with probability at least \(\alpha\), then
\[
\eps_{\mathrm{prop}}(k;s)
\leq
(1-\alpha)^k
\leq
e^{-\alpha k}.
\]
This proves the theorem.
\end{proof}

\section{Proofs and Extensions for Section~\ref{sec:blindspots}}
\label{app:robustness}

This appendix contains the derivations behind the blind-spot discussion in Section~\ref{sec:blindspots}. The main text states the exchangeable local version because it is the clearest way to see the message: proposal failure equals a blind-spot floor plus a finite-sampling residual. We record the proof, the heterogeneous-portfolio extension, finite-\(k\) rates, and the terminal-output analogue used by oracle best-of-\(k\) benchmarks.

\subsection{Proof of Lemma~\ref{lem:robust}}

\robustlem*

\begin{proof}
Fix a reachable valid nonterminal state \(s\). Conditional on \(Z\), the \(k\) proposal-success indicators are independent Bernoulli variables with common success probability \(q_s(Z)\). Therefore the conditional probability that all \(k\) proposals fail is
\[
\Pbb(\text{all proposals fail}\mid Z)=(1-q_s(Z))^k.
\]
Taking expectation over \(Z\) gives
\[
\eps_{\mathrm{prop}}(k;s)=\Ebb[(1-q_s(Z))^k].
\]
Split the expectation according to whether \(q_s(Z)=0\):
\[
\Ebb[(1-q_s(Z))^k]
=
\Pbb(q_s(Z)=0)
+
\Ebb[(1-q_s(Z))^k\mathbf 1\{q_s(Z)>0\}].
\]
This is exactly \(B_s+R_k(s)\). Since \((1-q_s(Z))^k\mathbf 1\{q_s(Z)>0\}\to0\) pointwise and is bounded by \(1\), dominated convergence gives \(R_k(s)\to0\). Hence \(\eps_{\mathrm{prop}}(k;s)\to B_s\).
\end{proof}

\subsection{Heterogeneous proposer portfolios}

The exchangeable formula in the main text can be replaced by a heterogeneous portfolio calculation. This version is useful when prompts, tools, retrieval contexts, decoding policies, or base models define different proposer families.

\begin{proposition}[Heterogeneous portfolio oracle miss]
\label{prop:heterogeneous_blindspot}
Fix a reachable valid nonterminal state \(s\). Suppose the proposal system contains families \(g=1,\ldots,G\), with \(n_g\) calls from family \(g\). Conditional on a latent variable \(Z\), calls are independent and each call from family \(g\) proposes an action in \(\Sound_x(s)\) with probability \(q_g(Z)\). Then
\[
\eps_{\mathrm{prop}}(\{n_g\};s)
=
\Ebb\left[\prod_{g=1}^G(1-q_g(Z))^{n_g}\right].
\]
If \(Q(Z):=\sum_{g=1}^G n_g q_g(Z)\), then
\[
\eps_{\mathrm{prop}}(\{n_g\};s)
\le
\Ebb[e^{-Q(Z)}].
\]
Consequently, if \(Q(Z)\ge\lambda\) except on an event of probability at most \(\eta\), then
\[
\eps_{\mathrm{prop}}(\{n_g\};s)
\le
\eta+e^{-\lambda}.
\]
If \(n_g\to\infty\) for every family \(g\), then
\[
\eps_{\mathrm{prop}}(\{n_g\};s)
\to
\Pbb(\forall g,\ q_g(Z)=0).
\]
\end{proposition}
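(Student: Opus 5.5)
The plan mirrors the proof of Lemma~\ref{lem:robust}, with the single Bernoulli parameter replaced by a product over families. Conditional on \(Z\), the \(\sum_g n_g\) proposal-success indicators are independent. The \(n_g\) indicators from family \(g\) each have success probability \(q_g(Z)\). Hence the conditional probability that every call fails factorizes:
\[
\Pbb(\text{all proposals fail}\mid Z)=\prod_{g=1}^G(1-q_g(Z))^{n_g}.
\]
Taking expectations over \(Z\) (tower property) gives the exact formula for \(\eps_{\mathrm{prop}}(\{n_g\};s)\).

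For the exponential bound, I will apply \(1-u\le e^{-u}\) for \(u\in[0,1]\) termwise. This gives \(\prod_g(1-q_g)^{n_g}\le \prod_g e^{-n_gq_g}=e^{-Q(Z)}\) pointwise in \(Z\), and taking expectations yields \(\Ebb[e^{-Q(Z)}]\). For the tail version, I will split the expectation on the event \(\{Q(Z)\ge\lambda\}\) and its complement. On the complement I bound the integrand by \(1\), contributing at most \(\eta\). On the event itself the integrand is at most \(e^{-\lambda}\), so that part contributes at most \(e^{-\lambda}\).

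For the limit, I will split according to the event \(A:=\{\forall g,\ q_g(Z)=0\}\). On \(A\) every factor equals \(1\), so the integrand is identically \(1\) and contributes exactly \(\Pbb(A)\). On \(A^c\), some \(g^*\) (depending on \(Z\)) has \(q_{g^*}(Z)>0\). The integrand is then at most \((1-q_{g^*}(Z))^{n_{g^*}}\), which tends to \(0\) as \(n_{g^*}\to\infty\). Since \(n_g\to\infty\) for every \(g\), this holds pointwise on \(A^c\) along any joint limit, for instance along a sequence of budgets \(\{n_g^{(j)}\}_j\) with \(\min_g n_g^{(j)}\to\infty\). The integrand is bounded by \(1\), so dominated convergence gives convergence to \(\Pbb(A)\).

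The only mildly delicate step is this limit. I need to state precisely what ``\(n_g\to\infty\) for every family'' means, which I will formalize as a sequence with \(\min_g n_g\to\infty\). I also need to check that the pointwise bound on \(A^c\) uses a family index that may depend on \(Z\). This is harmless because the convergence is pointwise. Everything else is the same algebra as in Lemma~\ref{lem:robust}.
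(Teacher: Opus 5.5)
Your proposal is correct and follows essentially the same argument as the paper: factorize the all-fail probability conditionally on \(Z\), take expectations, apply \(1-u\le e^{-u}\), split on the event \(\{Q(Z)\ge\lambda\}\), and obtain the limit by dominated convergence, using that the integrand tends to \(1\) on \(\{\forall g,\ q_g(Z)=0\}\) and to \(0\) otherwise. Your formalization of the joint limit as \(\min_g n_g\to\infty\) along sequences is a small extra bit of care that the paper leaves implicit.
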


\begin{proof}
Conditional on \(Z\), all calls are independent. The probability that all calls from family \(g\) fail is \((1-q_g(Z))^{n_g}\), and multiplying over families gives the product formula. The upper bound follows from \(1-u\le e^{-u}\):
\[
\prod_{g=1}^G(1-q_g(Z))^{n_g}
\le
\exp\!\left(-\sum_{g=1}^G n_g q_g(Z)\right)
=
e^{-Q(Z)}.
\]
If \(Q(Z)\ge\lambda\) on an event \(A\) with \(\Pbb(A)\ge1-\eta\), then the integrand is at most \(e^{-\lambda}\) on \(A\) and at most \(1\) on \(A^c\), giving \(\eta+e^{-\lambda}\). Finally, for fixed \(Z\), the product converges to \(1\) exactly when all \(q_g(Z)=0\), and to \(0\) otherwise. Dominated convergence proves the limiting floor.
\end{proof}

This proposition is the formal version of the diversity claim in the main text. Adding more calls from the same family only helps where that family has positive conditional proposal mass. Diversifying the portfolio can reduce the blind-spot floor only when different families cover different latent subpopulations.

\subsection{Finite-sampling residual and lower-tail rates}

The main text writes the local oracle miss as
\[
\eps_{\mathrm{prop}}(k;s)=B_s+R_k(s),
\qquad
R_k(s)=\Ebb[(1-q_s(Z))^k\mathbf 1\{q_s(Z)>0\}].
\]
The rate at which \(R_k(s)\) decays is determined by the lower tail of \(q_s(Z)\) near zero.

\begin{proposition}[Lower-tail bound]
\label{prop:lower_tail_rate}
Suppose that for some \(a,C>0\) and all sufficiently small \(t>0\),
\[
\Pbb(0<q_s(Z)\le t)
\le
C t^a.
\]
Then for all sufficiently large \(k\),
\[
R_k(s)
\le
C\left(\frac{a\log k}{k}\right)^a+k^{-a}.
\]
In particular, if \(q_s(Z)\ge\tau>0\) whenever \(q_s(Z)>0\), then
\[
R_k(s)
\le
e^{-\tau k}.
\]
\end{proposition}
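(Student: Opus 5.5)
The plan is to split the expectation defining \(R_k(s)\) at a threshold \(t_k\) that shrinks with \(k\). On the small-\(q\) event we bound the integrand by \(1\) and use the lower-tail hypothesis. On the large-\(q\) event we bound the integrand by an exponential, using \(1-u\le e^{-u}\) as in the proof of Theorem~\ref{thm:bridge}.

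Concretely, write
\[
R_k(s)=\Ebb\bigl[(1-q_s(Z))^k\mathbf 1\{0<q_s(Z)\le t_k\}\bigr]+\Ebb\bigl[(1-q_s(Z))^k\mathbf 1\{q_s(Z)>t_k\}\bigr].
\]
For the first term, \((1-q_s(Z))^k\le 1\), so it is at most \(\Pbb(0<q_s(Z)\le t_k)\le C t_k^a\). This uses the tail hypothesis, which requires \(t_k\) to lie in the "sufficiently small" range where that hypothesis holds.

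For the second term, on \(\{q_s(Z)>t_k\}\) we have \((1-q_s(Z))^k\le e^{-k q_s(Z)}\le e^{-k t_k}\). The probability factor is at most \(1\), so the term is at most \(e^{-kt_k}\). Choosing \(t_k:=a\log k/k\) gives \(e^{-kt_k}=k^{-a}\) and \(Ct_k^a=C(a\log k/k)^a\), which is exactly the claimed bound.

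The only point needing care, and the mild obstacle, is the phrase ``for all sufficiently large \(k\)''. Since \(a\log k/k\to0\), there is a \(k_0\) such that \(t_k\) lies below the threshold \(t_0\) at which the tail assumption becomes valid for every \(k\ge k_0\). For such \(k\), also \(t_k\le 1\), so \(1-q\in[0,1]\) and the exponential bound is legitimate. I would state this choice of \(k_0\) explicitly.

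For the ``in particular'' claim, suppose \(q_s(Z)\ge\tau\) whenever \(q_s(Z)>0\). Then on the event \(\{q_s(Z)>0\}\) the integrand satisfies \((1-q_s(Z))^k\le(1-\tau)^k\le e^{-\tau k}\). Bounding the probability of that event by \(1\) gives \(R_k(s)\le e^{-\tau k}\) for every \(k\), with no largeness condition and no use of the polynomial tail hypothesis. This case is just the first step above with the small-\(q\) event empty.
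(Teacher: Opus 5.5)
Your proposal is correct and follows the paper's proof essentially step for step: split \(R_k(s)\) at \(t_k=a\log k/k\), bound the small-\(q\) part by the tail hypothesis \(Ct_k^a\) and the large-\(q\) part by \(e^{-kt_k}=k^{-a}\), and handle the uniform-gap case directly via \((1-q_s(Z))^k\le e^{-\tau k}\). Your explicit choice of \(k_0\), so that \(t_k\) falls in the range where the tail hypothesis holds, fills in a detail the paper leaves implicit in the phrase ``for all sufficiently large \(k\)''.
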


\begin{proof}
For the uniform lower bound, simply use \((1-q_s(Z))^k\le e^{-\tau k}\) on \(\{q_s(Z)>0\}\). For the lower-tail bound, set \(t_k=a\log(k)/k\). Split
\[
R_k(s)
=\Ebb[(1-q_s(Z))^k\mathbf 1\{0<q_s(Z)\le t_k\}]
+\Ebb[(1-q_s(Z))^k\mathbf 1\{q_s(Z)>t_k\}].
\]
The first term is at most \(\Pbb(0<q_s(Z)\le t_k)\le C t_k^a\). For the second term, \((1-q)^k\le e^{-kq}\le e^{-kt_k}=k^{-a}\). Substituting \(t_k=a\log(k)/k\) proves the claim.
\end{proof}

Thus oracle boosting is complete up to blind spots, but the finite-\(k\) cost depends on the lower tail of proposal mass. Uniform positive mass gives logarithmic sample complexity in \(1/\varepsilon\), polynomial lower tails give polynomial convergence, and exponentially small proposal mass gives exponentially expensive brute-force boosting.

\subsection{Task-level oracle curves}

The theorem in the main text is local: \(q_s(Z)\) is the probability of proposing a progressing-sound next action at state \(s\). Benchmarking often uses the terminal-output analogue. For a proposer harness \(P\) that samples complete outputs \(Y\sim P(\cdot\mid x)\), define
\[
p_{\mathrm{oracle}}(k;P)
:=
\Pbb(\exists i\le k:\ R_x(Y_i)=1).
\]
If \(q_P(Z):=\Pbb(R_x(Y)=1\mid Z)\), the same calculation gives
\[
p_{\mathrm{oracle}}(k;P)
=
1-\Ebb[(1-q_P(Z))^k],
\qquad
\lim_{k\to\infty}p_{\mathrm{oracle}}(k;P)
=
1-\Pbb(q_P(Z)=0).
\]
This is the terminal-output version of the blind-spot floor. It is the quantity estimated by oracle best-of-\(k\) curves in verifier-backed benchmarks. If an implemented system is evaluated on the same candidate pool, the oracle-gap recovery ratio compares its success to this oracle curve.

\subsection{Common-shock special case}
\label{app:common_shock}

A useful closed-form dependence model is the following. With probability \(\rho\), all proposers share a common fate. In this branch, the whole committee fails with probability \(1-\alpha\). With probability \(1-\rho\), the proposers act independently with marginal success probability \(\alpha\), so the whole committee fails only if all \(k\) proposers fail, with probability \((1-\alpha)^k\). Therefore
\[
\eps_{\mathrm{prop}}(k)
=
\rho(1-\alpha)+(1-\rho)(1-\alpha)^k.
\]
Equivalently, the proposer success probability is
\[
\alpha_{\mathrm{committee}}
=
1-\rho(1-\alpha)-(1-\rho)(1-\alpha)^k.
\]
As \(k\to\infty\), the failure probability converges to \(\rho(1-\alpha)\), the common-shock floor.

\section{Extended Literature Review}
\label{app:extended-related}

This appendix expands the main related-work discussion. We organize prior work by \emph{mechanism} rather than by chronology. The unifying question is how existing methods instantiate, implicitly or explicitly, the four ingredients in our framework:
\[
\text{coverage},\qquad
\text{identifiability},\qquad
\text{progress},\qquad
\text{diversity}.
\]
The literature already contains many empirical demonstrations that extra inference-time compute can improve LLM performance: sampling more candidates, voting, ranking, searching, debating, using tools, or adding agents. Our goal is not to claim novelty for any of these practices. Rather, the framework in this paper gives a role-wise account of one mechanism behind the gains and its limits: proposer coverage creates an oracle gap, local identifiability recovers part of that gap, progress composes local decisions into trajectories, and diversity controls the ceiling through shared blind spots.

\subsection{Classical boosting, weak learning, and ensemble methods}

Classical boosting is the historical source of the weak-to-strong amplification analogy. The PAC weak-learning theorem of \citep{Schapire1990} shows that weak and strong learnability are equivalent under the classical PAC formalism. \citep{Freund1995Boosting} and \citep{FreundSchapire1997} developed algorithms for combining weak hypotheses, with AdaBoost becoming the canonical practical method. Bagging, stacking, and ensemble methods more broadly provide related mechanisms for variance reduction and model combination \citep{Breiman1996Bagging,Wolpert1992Stacked,Dietterich2000Ensemble}.

The analogy to LLM agent systems is useful but partial. Classical boosting assumes labeled examples and a weak learner that returns hypotheses in an evaluable label space. In our setting, there may be no label for an intermediate state unless the task exposes a local verifier. Moreover, the system must repeatedly generate and select local moves along a trajectory. Thus the classical weak edge is split into several quantities: local proposal coverage, local identification edge, progress depth, and dependence/diversity. This is the first way in which verifier-backed committee search differs from ordinary supervised boosting.

\subsection{LLMs as weak learners, weak supervision, and boosting-style LLM methods}

Several papers explicitly use weak-learning, weak-supervision, or boosting language for language models. \citep{Manikandan2023} show that prompt-based language models can act as weak learners inside a classical boosting pipeline for tabular classification. In their setting, the LLM produces textual summaries or templates that serve as weak classifiers, and boosting is applied in a supervised learning loop. This is close in terminology but different in object: our system performs black-box inference-time search rather than training-time tabular boosting.

LLMBoost \citep{Chen2025LLMBoost} is another close terminological neighbor. It proposes an ensemble fine-tuning framework that uses cross-model attention, chain training, error-suppression objectives, and near-parallel inference. This is a representation-sharing and training/fine-tuning approach. By contrast, our framework treats the generator, critic, and comparator as black-box inference-time components. We therefore do not require internal hidden states, cross-model attention, or parameter updates.

Weak-to-strong generalization is also conceptually adjacent. \citep{Burns2024} study whether weak model supervision can elicit strong capabilities from a stronger model. Iterated amplification \citep{Christiano2018} similarly studies how weak experts can supervise stronger learners through recursive decomposition. These works share our interest in using weak signals to elicit stronger behavior, but the object is different: they study supervision and training protocols, whereas we study fixed-weight inference-time committees operating over verifier-backed search trajectories.

Recent self-improvement and sharpening-style works are also related. The sharpening mechanism studies how a model can use verifier-like feedback to concentrate probability mass on high-quality outputs and amortize expensive inference-time computation \citep{Huang2025Sharpening}. These works are aligned with our interest in eliciting latent capability, but our focus is a theorem for committee search and its ceilings: coverage, local identifiability, progress, and shared blind spots.

\subsection{Multi-sample inference: self-consistency, best-of-\(N\), and repeated sampling}

A large line of work improves LLM reasoning by sampling multiple outputs. Chain-of-thought prompting elicits intermediate reasoning traces \citep{Wei2022CoT}; self-consistency then samples multiple reasoning paths and marginalizes over answers rather than using greedy decoding \citep{Wang2023}. This is one of the clearest empirical antecedents of our coverage story. The method works when at least some sampled traces reach the correct answer and the majority/consistency heuristic can identify the right answer.

Rationale-augmented ensembles sample rationales and aggregate predictions, showing that rationale diversity can improve few-shot robustness \citep{Wang2022RationaleEnsembles}. Universal Self-Consistency extends the aggregation idea to settings where answers are not trivially comparable by exact match, using an LLM to select the most consistent answer among free-form candidates \citep{Chen2023UniversalSelfConsistency}. Best-of-\(N\), pass@$k$, and oracle-selection evaluations in code and reasoning instantiate the same principle: repeated sampling reveals latent correct mass.

Large Language Monkeys \citep{Brown2024LargeLanguageMonkeys} is especially close to our benchmarking section. It studies repeated sampling as inference-time compute scaling and measures \emph{coverage}, the fraction of problems solved by at least one sample. In verifier-backed domains such as coding and formal proofs, this coverage can directly translate into performance if correct samples can be identified. The paper also finds that in domains without automatic verifiers, majority voting and reward models may plateau, leaving a generation-verification gap. Our theory formalizes this distinction: oracle best-of-$k$ estimates latent boostable capability, while practical selectors recover only part of the oracle gap.

\subsection{More agents, compound inference systems, and voting laws}

More Agents Is All You Need \citep{Li2024MoreAgents} studies a simple Agent Forest method: instantiate many LLM agents, sample outputs, and aggregate by voting. The paper reports that performance scales with the number of agents and that the degree of improvement correlates with task difficulty. This is one of the clearest empirical motivations for our theory. In our language, adding agents helps when it increases coverage or diversity; it saturates when additional agents are shared-blind-spot clones.

\citep{Chen2024MoreCalls} study scaling laws for compound inference systems, focusing on Vote and Filter-Vote architectures. They show that increasing the number of calls can yield non-monotone performance because easy and hard instances respond differently to majority voting. This complements our analysis: their theory focuses on flat voting systems and majority aggregation, while ours studies sequential verifier-backed search with explicit identifiability and progress conditions.

Recent surveys on test-time scaling, LLM ensembles, and verifier design organize this rapidly expanding landscape \citep{Zhang2025TestTimeScalingSurvey,Chen2025LLMEnsemble,Viswanathan2025TrustButVerify}. These surveys emphasize that additional inference compute can be allocated in many ways: parallel sampling, sequential refinement, search, verification, aggregation, routing, fusion, or distillation. Our paper contributes a mechanism-level theorem for one important slice of that landscape.

\subsection{Verifiers, reward models, process supervision, and judges}

Verifier-based selection is a central precedent for our identifiability assumption. \citep{Cobbe2021} train verifiers for math word problems and show that generating many candidate solutions and selecting with a verifier improves GSM8K performance. This is the flat-output analogue of our proposer-plus-critic decomposition.

Process supervision moves verification from final answers to intermediate steps. \citep{Lightman2024} compare outcome and process supervision and release PRM800K, demonstrating that process-supervised reward models can improve mathematical reasoning. \citep{Wang2024MathShepherd} construct automatic process supervision for math reasoning and use Math-Shepherd both for reranking and reinforcement. These works are strongly aligned with our local-identifiability story: step-level verifiers instantiate local signals that can reject or score partial reasoning moves.

LLM-as-a-judge and pairwise ranking methods provide another route to identifiability. LLM-Blender \citep{Jiang2023LLMBlender} uses a PairRanker to compare candidate outputs and a GenFuser to synthesize improved outputs. Multi-Agent Verification \citep{Lifshitz2025MAV} scales test-time compute by adding multiple aspect verifiers and combining them with best-of-\(n\) sampling. Weaver \citep{SaadFalcon2025Weaver} studies the generation-verification gap and combines many weak verifiers using weak supervision, showing that verifier aggregation can recover much of the gap between pass@1 and oracle selection. Very recent verifier-ensemble work such as FUSE \citep{Lee2026FUSE} pushes this idea further by ensembling verifiers without ground-truth labels.

Reward-model and judge benchmarks show that selection models have their own biases and failure modes. MT-Bench and Chatbot Arena use LLM-as-a-judge as a scalable approximation to human preferences \citep{Zheng2023}; RewardBench evaluates reward models for preference modeling and RLHF-style selection \citep{Lambert2024RewardBench}; recent surveys map the broader LLM-as-a-judge landscape \citep{Gu2024LLMJudgeSurvey}. These works reinforce a central point of our paper: selection is not a cosmetic add-on. It is a separate resource. Our bridge theorem and black-box impossibility result formalize this separation: local proposal coverage does not imply a usable critic or comparator edge.

\subsection{Tool-backed checking, self-correction, and verification loops}

Several works improve LLM outputs by adding external feedback, tools, or iterative verification. CRITIC \citep{Gou2024CRITIC} uses tool-interactive critiquing to validate and revise outputs. Chain-of-Verification \citep{Dhuliawala2024CoVe} asks models to draft answers, generate verification questions, answer them independently, and then revise. Self-Refine \citep{Madaan2023SelfRefine} iteratively generates feedback and refines outputs without additional training data. Reflexion \citep{Shinn2023Reflexion} stores verbal feedback in memory to improve subsequent trials.

Tool-using systems also change the effective verification and proposal process. Toolformer \citep{Schick2023Toolformer} trains models to decide when and how to call external APIs; HuggingGPT \citep{Shen2023HuggingGPT} uses an LLM as a controller over a collection of external models; ReAct-style systems interleave reasoning and environment actions. In our terminology, tools can improve local identifiability, increase proposal coverage, reduce blind spots, or change the decomposition so that progress becomes easier to certify.

These systems instantiate repeated local improvement, but their reliability depends on whether feedback exposes true errors and whether revisions make progress. Our framework makes those requirements explicit. A feedback loop is useful only insofar as it increases local identifiability, improves proposal coverage on later attempts, changes the state decomposition, or reduces the cost of reaching the same boostable ceiling.

\subsection{Search over partial states and planning-style inference}

Search-based prompting and agent methods are natural neighbors of our valid-state framework. Least-to-most prompting decomposes hard problems into easier subproblems solved sequentially \citep{Zhou2022LeastToMost}. Tree of Thoughts generalizes chain-of-thought into a search over intermediate thoughts, using evaluation and backtracking \citep{Yao2023ToT}. Self-evaluation-guided beam search uses stepwise self-evaluation to guide stochastic beam search through reasoning chains \citep{Xie2023SelfEval}. RAP uses the LLM as both world model and reasoning agent, and applies Monte Carlo tree search \citep{Hao2023RAP}. LATS integrates reasoning, acting, planning, self-reflection, and MCTS-style search \citep{Zhou2023LATS}. ReAct interleaves reasoning traces with external actions \citep{Yao2023ReAct}.

These works already embody the idea that reasoning is not merely one-shot generation. Our paper contributes a set of sufficient and necessary conditions under which such search-like systems can amplify weak local reasoning. In particular, decomposition alone is not enough. The decomposition must expose progressing-sound actions that the proposer can sample and that local signals can identify.

\subsection{General multi-agent orchestration, routing, and model aggregation}

A broad multi-agent literature studies how LLM agents should communicate, specialize, coordinate, route tasks, and use tools. CAMEL introduces role-playing communicative agents \citep{Li2023CAMEL}; AutoGen provides a programmable multi-agent conversation framework \citep{Wu2023AutoGen}; MetaGPT encodes standardized operating procedures for collaborative software development \citep{Hong2024MetaGPT}; ChatDev studies communicative agents for software development workflows \citep{Qian2023ChatDev}; AgentVerse studies multi-agent collaboration and emergent behavior \citep{Chen2023AgentVerse}. Mixture-of-Agents constructs layered multi-model aggregation where each layer conditions on outputs from the previous layer \citep{Wang2024MixtureAgents}. Archon searches over inference-time architectures combining generation, ranking, fusion, verification, and other inference-time techniques under compute budgets \citep{SaadFalcon2024Archon}. Routing methods such as Smoothie study how to choose among LLMs without labeled data or with weak supervision \citep{Guha2024Smoothie}.

These works explore design spaces rather than proving a universal theory. Our framework offers a diagnostic lens for them. A role or subagent is valuable if it improves one of the load-bearing quantities: coverage, identifiability, progress, or diversity per unit cost. Additional agents are not inherently useful; they help if they reduce blind spots, provide independent evidence, improve selection, expose new decompositions, or approach the same boostable ceiling with lower budget.

\subsection{Software-engineering agents and verifier-backed code benchmarks}

Software engineering is one of the best testbeds for our theory because candidate artifacts can often be checked by execution, tests, type checkers, or repository-specific validators. Codex \citep{Chen2021Codex} and AlphaCode \citep{Li2022AlphaCode} already showed the importance of sampling, filtering, and program behavior for code generation; AlphaCode in particular generated many candidate programs, filtered by behavior, and clustered candidates before submission. SWE-bench \citep{Jimenez2024SWEbench} introduced realistic GitHub issue resolution tasks that require repository navigation, code editing, and execution-based validation. SWE-agent \citep{Yang2024SWEagent} shows that agent-computer interface design can substantially improve software-engineering agents. AutoCodeRover \citep{Zhang2024AutoCodeRover} combines LLMs with code search and program analysis. Agentless \citep{Xia2025Agentless} argues that simpler localization-repair-validation pipelines can achieve strong SWE-bench Lite performance with lower cost than more complex autonomous agents.

These systems support one of our practical implications: better agent systems are not necessarily more elaborate systems. They are systems that move the coverage-identifiability-progress-diversity frontier efficiently. A simple pipeline with strong localization and validation can beat a more ornate agent if it recovers the oracle gap with less cost.

\subsection{Inference-time alignment, reward-model limitations, and verifier aggregation}

Inference-time alignment studies how to use additional compute and imperfect reward models to improve responses without updating the base model. \citep{Huang2025BestOfN} analyze best-of-\(N\) alignment and show that coverage over high-quality responses is crucial; they also show that naively increasing \(N\) can suffer from reward hacking under imperfect reward models. This is conceptually close to our separation between oracle best-of-$k$ and implemented selection. In our terminology, a large oracle gap indicates latent correct mass, but a weak selector may fail to recover it or may select reward-hacked candidates.

Verifier aggregation methods such as Weaver and FUSE \citep{SaadFalcon2025Weaver,Lee2026FUSE} are especially close to our identification story. They treat imperfect verifiers as weak signals and combine them to approach stronger verification behavior. Our theory abstracts this into critic/comparator edges and highlights the missing robustness question: repeated verifiers can also share blind spots, just as repeated proposers can.

Reward-model and judge benchmarks such as RewardBench and LLM-as-a-judge evaluations further show that selection models have their own biases and failure modes \citep{Lambert2024RewardBench,Zheng2023,Gu2024LLMJudgeSurvey}. This reinforces the need to treat identifiability as a separate assumption rather than deriving it from generation.

\subsection{Benchmarking implications}

The literature increasingly evaluates models and systems using pass@1, best-of-\(N\), pass@$k$, oracle selection, verifier selection, and budgeted success. Our framework suggests a unified reporting scheme for verifier-backed benchmarks:
\[
p_1(P),\qquad
p_{\mathrm{oracle}}(K;P),\qquad
p_{\mathrm{system}}(K,m,r;P),\qquad
\mathrm{Rec}(K,m,r;P),
\]
together with token/call/latency budgets. This separates one-shot capability, boostable capability, selector quality, and efficiency. Prior work often reports one or two of these quantities; our theory explains why all four matter. In particular, oracle best-of-$k$ measures latent proposal mass, residual oracle failure estimates blind-spot mass, implemented system performance measures harness quality, and budgeted curves measure how efficiently the system approaches the boostable ceiling.

\end{document}